\documentclass[nohyperref]{article}

\usepackage[nonatbib,final]{neurips_2023}

\usepackage[T1]{fontenc}
\usepackage[latin9]{inputenc}
\usepackage{amstext}
\usepackage{amsthm}
\usepackage{amssymb}
\usepackage{dsfont}
\usepackage{color}

\usepackage[T1]{fontenc}
\usepackage[american]{babel}
\usepackage{url}
\usepackage{booktabs}
\usepackage{amsfonts}
\usepackage{nicefrac}
\usepackage{csquotes}
\usepackage{enumitem}
\usepackage{etoolbox}
\usepackage{stmaryrd}
\usepackage{amstext}
\usepackage{amsmath}
\usepackage{amssymb}
\usepackage{xcolor}
\usepackage{appendix}
\usepackage{graphicx}
\usepackage{epstopdf}
\usepackage{subcaption}
\usepackage{blindtext}
\usepackage{algorithmic}
\usepackage{algorithm}
\usepackage[colorlinks,linkcolor={red!80!black},citecolor={blue},allbordercolors={1 1 1},urlcolor={blue!80!black},hypertexnames=false]{hyperref}
\usepackage[noabbrev,capitalize]{cleveref}  
\crefformat{equation}{(#2#1#3)}
\crefrangeformat{equation}{(#3#1#4) to~(#5#2#6)}
\crefmultiformat{equation}{(#2#1#3)}{ and~(#2#1#3)}{, (#2#1#3)}{ and~(#2#1#3)}
\crefname{appsec}{Appendix}{Appendices}

\usepackage{automatic_resize}

\usepackage[backend=bibtex,style=numeric,maxcitenames=2,maxbibnames=8,sortcites,isbn=false,doi=false,giveninits=true]{biblatex}

\addbibresource{extracted_bib.bib}

\newcommand{\R}{\mathbb R}

\newcommand{\PP}{\hat{\mathbb P}}

\newcommand{\X}{\mathcal X}

\makeatletter
\newcommand\given{\@ifstar{\mathrel{}\middle|\mathrel{}}{\mid}}

\DeclareRobustCommand{\abs}{\@ifstar\@abs\@@abs}

\DeclareRobustCommand{\norm}{\@ifstar\@norm\@@norm}

\DeclareRobustCommand{\inner}{\@ifstar\@inner\@@inner}
\makeatother

\newcommand*\diff{\mathop{}\!\mathrm{d}}

\usepackage{mathtools}

\newtheorem{lem}{Lemma}

\newtheorem{prop}{Proposition}

\newlist{assumplist}{enumerate}{1}
\setlist[assumplist]{label=(\textbf{\Alph*})}
\Crefname{assumplisti}{Assumption}{Assumptions}
\newlist{assumplist2}{enumerate}{2}
\setlist[assumplist2]{label=(\textbf{\alph*})} 
\Crefname{assumplist2i}{Assumption}{Assumptions}

\newlist{assumplistobs}{enumerate}{3}
\setlist[assumplistobs]{label=(\textbf{$\mathcal{F}$-\alph*})} 
\Crefname{assumplistobs}{Assumption}{Assumptions}

\makeatother

\usepackage{babel}

\crefname{lem}{Lemma}{Lemmas}
\Crefname{lem}{Lemma}{Lemmas}
\crefname{thm}{Theorem}{Theorems}
\Crefname{thm}{Theorem}{Theorems}
\crefname{prop}{Proposition}{Propositions}
\Crefname{prop}{Proposition}{Propositions}

\title{Rethinking Gauss-Newton for learning over-parameterized models}

\author{Michael Arbel, Romain Menegaux\thanks{Equal contribution.}, and Pierre Wolinski$^*$\thanks{Now affiliated with the LMO, Universit\'e Paris-Saclay, Orsay, France.} \\
Univ. Grenoble Alpes, Inria, CNRS, Grenoble INP, LJK,38000 Grenoble, France\\
 \small{\texttt{firstname.lastname@inria.fr}}}

\begin{document}

\maketitle

\begin{abstract} 
This work studies the global convergence and implicit bias of Gauss Newton's (GN) when optimizing over-parameterized one-hidden layer networks in the mean-field regime. 
We first establish a global convergence result for GN in the continuous-time limit exhibiting a faster convergence rate compared to GD due to improved conditioning. 
We then perform an empirical study on a synthetic regression task to investigate the implicit bias of GN's method.
While GN is consistently faster than GD in finding a global optimum, the learned model generalizes well on test data when starting from random initial weights with a small variance and using a small step size to slow down convergence. 
Specifically, our study shows that such a setting results in a hidden learning phenomenon, where the dynamics are able to recover features with good generalization properties despite the model having sub-optimal training and test performances due to an under-optimized linear layer. This study exhibits a trade-off between the convergence speed of GN and the generalization ability of the learned solution.
\end{abstract}

\section{Introduction}
Gauss-Newton (GN) and related methods, like the Natural Gradient (NG) can offer faster convergence rates compared to gradient descent due to the improved dependence on the conditioning of the problem \cite{Martens:2015a,Grosse:2016a,Botev:2017,Martens:2020}.
For these reasons, these methods have attracted attention in machine learning as an alternative to gradient descent when optimizing ill-conditioned problems arising from the use of vastly over-parameterized networks and large training sets \cite{Martens:2010,Sutskever:2013}. 
Unfortunately, GN's high computational cost per iteration, which involves solving an expensive linear system, restricts its applicability in large-scale deep learning optimization. 
Addressing this challenge has been a primary focus, with extensive efforts dedicated to crafting computationally efficient approximations for GN/NG methods where the aim is to strike a delicate balance between computational cost and optimization speed, thereby enhancing scalability \cite{Martens:2015a, Grosse:2016a,Botev:2017,Arbel:2019a,Cai:2019,Ren:2019a}. 

While substantial effort has been put into making GN scalable, little is known about the global convergence of these methods and their generalization properties when optimizing neural networks, notably compared to gradient descent. 
Understanding these properties is particularly relevant knowing that state-of-the-art machine learning methods rely on over-parameterized networks for which infinitely many solutions interpolating the data exist albeit exhibit varying generalization properties that depend on \emph{implicit bias} of the optimization procedure \cite{Neyshabur:2014,Belkin:2019}.  
To date, the majority of studies on generalization in over-parameterized networks have primarily focused on analyzing the dynamics of gradient descent and gradient flows -- their continuous-time limit --, making important progress in understanding the implicit bias of these optimization procedures. 
Specifically, \cite{Chizat:2019a} showed that a gradient flow can run under two regimes which yield solutions with qualitatively different generalization properties: the \emph{kernel regime}, where the internal features of a network remain essentially constant during optimization, often yields poor generalization, and the \emph{feature learning regime}, which allows the internal features of a network to adapt to the data, seems behind the impressive generalization properties observed in practice. 
In addition, the same work precisely defines conditions for these regimes: The \emph{kernel regime} systematically occurs, for instance, when taking the \emph{Neural Tangent limit} (NTK), i.e. scaling the output of a network of width $M$ by a factor $1/\sqrt{M}$, while the \emph{feature learning regime} can arise when instead taking the \emph{mean-field limit}, i.e. scaling the output by a factor of $1/M$.  
However, these insights rest upon attaining a global solution, which is far from obvious given the non-convexity of the training objective. 
While global linear convergence of gradient flows is known in the kernel regime \cite{Jacot:2018, Allen-Zhu:2019}, obtaining such convergence guarantees in the \emph{feature learning regime} remains an active research area  \cite{Chizat:2018,Blanc:2020,Boursier:2022,Chen:2022a}. 
Importantly, all these works have focused solely on gradient flows and gradient descent, given their prevalent usage when optimizing neural networks. 

In contrast, GN/NG methods have received limited attention in the study of both generalization and global convergence for over-parameterized networks.
The added complexity of these methods makes their analysis intricate, particularly in the stochastic setting where inaccurate estimates of the pre-conditioning matrix can alter the dynamics in a highly non-trivial manner \cite[Section 12.2]{Martens:2012}.  
Recent works studied the convergence and generalization of GN in the kernel regime \cite{Cai:2019,Zhang:2019,Karakida:2020,Garcia:2023}, while others focused on linear networks \cite{Kerekes:2021}. 
However, the behavior of these methods for non-linear over-parameterized models remains understudied in the \emph{feature learning regime} which is a regime of interest as it is more likely to result in good generalization.   

{\bf Contributions.}
The present work aims to deepen our understanding of the properties of Gauss-Newton (GN) methods for over-parameterized networks,  building upon recent advancements in neural network optimization \cite{Chizat:2019a}. 
Unlike prior works that primarily addressed scalability issues of GN/NG or studied the impact of stochastic noise on optimization dynamics, we uniquely focus on studying the convergence and implicit bias of an \emph{exact GN method}. 
Specifically, our research investigates GN in the context of a regression problem using over-parameterized one-hidden layer networks in a deterministic (\emph{full-batch}) setting. 
By examining an exact, deterministic GN method, rather than an approximate or stochastic version, we eliminate potential regularization effects stemming from approximation and stochasticity that might influence convergence and generalization. This strategy, in turn, enables us to provide both theoretical and empirical insights, shedding light on the properties of GN methods when optimizing neural networks: 
\begin{enumerate}[left=0.1cm]
	\item We provide a global convergence result in   \cref{prop:global_convergence} with a linear rate for the continuous-time limit of GN's dynamics in the over-parameterized regime that holds under the mean-field limit. 
The achieved rate shows better conditioning than the gradient flow with the same objective, emphasizing the method's advantage from a pure optimization perspective. 
While similar rates can be achieved under strong non-degeneracy conditions as in \cite{Yudin:2021}, these conditions are never met over-parameterized networks. Instead, our result relies on a simple condition on the initial parameters of a similar nature as in \cite[Thm 3.3]{Chizat:2019}:  
	(i) that the hidden weights are diverse enough so that an interpolating solution can be obtained simply by fitting the linear weights, 
	and (ii) that the linear weights result from a simple fine-tuning procedure to reach near optimality. 
	The result shows that GN can reach the data interpolation regime, often required for studying the implicit bias of optimization  (\cref{seq:convergence_analysis}). 
\item The empirical study in \cref{sec:experiments} complements the convergence result in \cref{prop:global_convergence} by further investigating the implicit bias of GN on a student/teacher regression task using synthetic data. We show that, while GN attains an interpolating solution, as predicted by \cref{prop:global_convergence}, such a solution generalizes more or less well depending on the regime under which the GN dynamics is running. Specifically, we show that the GN dynamics can exhibit both kernel and feature learning regimes depending on the choice of step size and variance of the weights at initialization.  
Our experiments also indicate that a \emph{hidden learning} phenomenon occurs in GN dynamics where hidden features that generalize well are obtained even when both \emph{raw} training and test performances are sub-optimal due to an under-optimized linear layer. 
Quite surprisingly, these features are found when using small step sizes and, sometimes, generalize better than those learned by gradient descent. This is contrary to the common prescription of using larger step sizes in gradient descent for better generalization \cite{Andriushchenko:2022}. 
Our results suggest a tradeoff between optimization speed and generalization when using GN which is of practical interest.
\end{enumerate}

\section{Related work}

\paragraph{Convergence of Gauss-Newton's methods.}
A large body of work on inverse problems studied the local convergence of Gauss-Newton and its regularized versions such as the Levenberg-Marquardt method \cite{Kaltenbacher:2008,Nesterov:2007}. Many such works focus on a continuous-time analysis of the methods as it allows for a simplified study \cite{Kaltenbacher:2002,Ramm:2004}. In this work, we also consider continuous-time dynamics and leave the study of discrete-time algorithms to future work. 
More recently, increasing attention was given to global convergence analysis. 
\cite{Yudin:2021} studied the global convergence of GN with an \emph{inexact oracle} and in a stochastic setting establishing linear convergence. The analysis requires that the smallest singular value of the Jacobian is positive near initialization, a situation that can never occur in the over-parameterized regime.
\cite{Mishchenko:2021} studied the convergence of the Levenberg-Marquardt dynamics, a regularized version of Gauss-Newton's method, in a non-convex setting under a \emph{cubic growth} assumption and showed global albeit sub-linear convergence rates. It is unclear, however, if such \emph{cubic growth} assumption holds in our setting of interest. 
Closest to our work is \cite{Cai:2019} which aims to accelerate optimization by solving a kernel regression in the NTK scaling limit. There, the authors show global convergence of a regularized version of GN in the \emph{kernel regime}. We aim to investigate the behavior of GD in the \emph{feature learning} regimes. %

\paragraph{The implicit bias of gradient flows.}
In the over-parameterized setting, the training objective often possesses several global minima that perfectly interpolate the training data. 
The generalization error thus heavily depends on the solution selected by the optimization procedure. 
\cite{Neyshabur:2014} highlighted the importance of understanding the properties of the selected solutions for a given initialization and several works studied this problem in the case of linear neural networks \cite{Pesme:2021,Min:2021,Yun:2020,Arora:2019b,Woodworth:2020}. 
For non-linear networks, there is still no general characterization of the implicit bias in the regression setting, although \cite{Boursier:2022} recently made important progress in the case of a one-hidden layer network with ReLU activation and orthogonal inputs showing that gradient flows select a minimum norm interpolating solution when the initial weights are close to zero. 
Recently, the implicit bias of gradient flows was shown to play a crucial role in other optimization problems such as non-convex bi-level optimization \cite{Vicol:2021,Arbel:2022}, therefore illustrating the ubiquity of such phenomenon in deep learning. The present work empirically studies the implicit bias of Gauss-Newton which is vastly understudied.

\section{Setup and preliminaries}
\subsection{Regression using over-parameterized one-hidden layer networks}\label{sec:over-parameterized}
We consider a regression problem where the goal is to approximate an unknown real-valued function $f^{\star}$ defined over a subset $\mathcal{X}$ of $\mathbb{R}^d$ from a pair of i.i.d. inputs/outputs $(x_n,y_n)_{1\leq n\leq N}$ where each $x_n$ is a sample from some unknown distribution $\mathbb{P}$ and $y_n = f^{\star}(x_n)$.  
We assume that $f^{\star}$ belongs to  $L_2(\mathbb{P})$, the set of square-integrable functions w.r.t.\ $\mathbb{P}$ and note that $f^{\star}$ always belongs to $L_2(\hat{\mathbb{P}})$, where $\hat{\mathbb{P}}$ denotes the empirical distribution defined by the samples  $(x_1,\dots,x_N)$. For simplicity, we assume the data are non-degenerate, meaning that $x_n{\neq} x_{n'}$ whenever $n\neq n'$. In this case $L_2(\hat{\mathbb{P}})$ is isomorphic to $\R^N$ (i.e., $L_2(\hat{\mathbb{P}})\cong\R^N$) and we can identify any function $f\in L_2(\hat{\mathbb{P}})$ with the evaluation vector $(f(x_1),\dots,f(x_N))$.
We are interested in approximating $f^{\star}$ using a one-hidden layer network $f_w$ with parameter vector $w$ belonging to some, possibly infinite, Hilbert space $\mathcal{W}$. 
The network's parameters are learned by minimizing an objective of the form $\mathcal{L}(f_w)$ where $\mathcal{L}$ is a non-negative, $L$-smooth, and $\mu$-strongly convex function defined over $L_2(\PP)$ and achieving a $0$ minimum value at $f^{\star}$. A typical example that we  consider in \cref{sec:experiments} is the mean-squared error over the training set: 
\begin{align}\label{eq:squared_loss}
	\min_{w\in \mathcal{W} }~ \mathcal{L}(f_w),\qquad \mathcal{L}(f_w) =\frac{1}{2N}\sum_{n=1}^N\parens{f_w(x_n)-y_n}^2.
\end{align}
For the convergence results, we will consider both \emph{finite-width} one-hidden layer networks and their \emph{mean-field infinite-width limit}. In the experiments, we will restrict to finite-width networks although, we will be using a relatively large number of units $M$ compared to the size of the training data, therefore approximating the mean-field limit. 

{\bf Finite-width one-hidden layer networks.} Given a non-polynomial point-wise activation function $\gamma$ and some positive integer $M$, these networks take the form:
\begin{align}\label{eq:one_hidden}
	f_w(x) = \frac{1}{M}\sum_{i=1}^M v_i \gamma(u_i^{\top}x), \qquad w = (v,u)\in \mathcal{W} := \R^{M}\times \R^{M\times d},
\end{align}
where $v\in \R^M$ is the linear weight vector while $u\in \R^{M\times d}$ is the hidden weight matrix. Popular choices for $\gamma$ include \verb+ReLU+ \cite{Nair:2010} and its smooth approximation \verb+SiLU+$(x) = x(1+e^{-\beta x})^{-1}$ which can be made arbitrary close to \verb+ReLU+ by increasing $\beta$ \cite{Elfwing:2017}. The above formulation can also account for a bias term provided the input vector $x$ is augmented with a non-zero constant component. 
 
{\bf Mean-field infinite-width limit.}
We consider some base probability $\mu$ measure with full support on $\X$ and finite second moment and denote by $L_2(\mu)$ and $L_2(\mu,\mathbb{R}^d)$ the set of square integrable functions w.r.t.\ $\mu$ and taking values in $\R$ and $\mathbb{R}^d$. 
Given a non-polynomial point-wise non-linearity, we define infinitely-wide one-hidden layer networks $f_w$ as:
\begin{align}\label{eq:infinite_width_networks}
	f_w(x) = \int v(c)\gamma(u(c)^{\top}x)\diff \mu(c), \qquad w:= (v,u) \in \mathcal{W}:=L_2(\mu)\times L_2(\mu,\mathbb{R}^d).
\end{align}
Functions of the form \cref{eq:infinite_width_networks} correspond to the \emph{mean-field limit} of the network defined in \cref{eq:one_hidden} when the number of units $M$ grows to infinity and appear in the Lagrangian formulation of the Wasserstein gradient flow of infinitely wide networks \cite{Wojtowytsch:2020a}.
Existing global convergence results of GN methods were obtained in the context of the NTK limit \cite{Cai:2019} which does not allow \emph{feature learning}. 
The mean-field limit we consider here can result in optimization dynamics that exhibit \emph{feature learning} which is why we consider it in our global convergence result analysis. For completeness, we provide a brief discussion on mean-field vs NTK limits and kernel vs feature learning regimes in \cref{appendix:background}. 

{\bf Over-parameterization.} 
We consider an over-parameterized setting where the network has enough parameters to be able to fit the training data exactly, thus achieving a $0$ training error. To this end, we define the gram matrix $G$ which is an $N\times N$ matrix taking the following forms depending on whether we are using a finite-width network $(G^F)$ or infinite-width network $(G^{I})$:
$$
	G_{n,n'}^{F}(u) = \frac{1}{M}\sum_{i=1}^M \gamma(u_i^{\top}x_n)\gamma(u_i^{\top}x_{n'}), \quad  G_{n,n'}^{I}(u) = \int\gamma(u(c)^{\top}x_n)\gamma(u(c)^{\top}x_{n'})\diff \mu(c).
$$
We say that a network is over-parameterized if $G(u_0)$ is invertible for some hidden-layer weight $u_0$: 
\begin{assumplist}[resume]
\item\label{assump:surj} {\bf (Over-parameterization)} $G(u_0)$ is invertible for an initial parameter $w_0{=}(v_0,u_0){\in}\mathcal{W}$.
\end{assumplist}
When $u_0$ is sampled randomly from a Gaussian, a common choice to initialize a neural network, \cref{assump:surj} holds for infinitely wide networks as long as the training data are non-degenerate (see \cite[Theorem 3.1]{Du:2018b} for \verb+ReLU+, and  \cite[Lemma F.1]{Du:2019} for analytic non-polynomial activations). There, the \emph{non-degeneracy} assumption simply means that the inputs are not parallel to each other (i.e.\  $x_i\nparallel x_j$ whenever $i\neq j$  for any $1\leq i,j\leq N$), a property that holds almost surely if the data distribution $\mathbb{P}$ has a density w.r.t. Lebesgue measure. In the case of finite-width networks with $M > N$, the result still holds with a high probability when $u$ is sampled from a Gaussian \cite{Liu:2022b}. 

\subsection{Generalized Gauss-Newton dynamics}
To solve \cref{eq:squared_loss}, we consider optimization dynamics based on a generalized Gauss-Newton method. 
To this end, we denote by $J_w$ the Jacobian of $\left(f_w(x_1),...,f_w(x_N)\right)$ w.r.t.\ parameter $w$ which can be viewed as a linear operator from $\mathcal{W}$ to $\R^N$. Moreover, $\nabla\mathcal{L}(f_{w})$ denotes the vector of size $N$ representing the gradient of $\mathcal{L}$ w.r.t.\ the outputs $(f_w(x_1),...,f_w(x_N))$. We can then introduce the Gauss-Newton vector field $\Phi: \mathcal{W}\rightarrow \mathcal{W}$ defined as:
$$
	\Phi(w) := (J_{w}^{\top}H_{w} J_{w} + \varepsilon(w)I)^{-1}J_{w}^{\top}\nabla \mathcal{L}(f_{w}),
$$
where $H_w$ is a symmetric positive operator on $L_2(\PP)$ and  $\varepsilon(w)$ is an optional non-negative (possibly vanishing) damping parameter. Starting from some initial condition $w_0\in \mathcal{W}$ and for a given positive step-size $\gamma$, the Gauss-Newton updates and their continuous-time limit are given by:
\begin{align}\label{eq:preconditioned_flow}
	 w_{k+1}= w_k - \gamma\Phi(w_k), \qquad \dot{w}_t &= -\Phi(w_t). 
\end{align}
The continuous-time limit is obtained by taking the step-size $\gamma$ to $0$ and rescaling time appropriately. 
When $H_w$ is given by the Hessian of $\mathcal{L}$, the dynamics in \cref{eq:preconditioned_flow} recovers the generalized Gauss-Newton dynamics in continuous time when no damping is used \cite{Schraudolph:2002a} and recovers the continuous-time Levenberg-Maquardt dynamics when a positive damping term is added  \cite{More:2006}. When the matrix $H_w$ is the identity $H_w = I$, the resulting dynamics is tightly related to the natural gradient \cite{Amari:1998,Martens:2020}. More generally, we only require the following assumption on $H_w$:
\begin{assumplist}[resume]
 \item\label{assump:A} $H_w$ is continuously differentiable in $w$ with eigenvalues in $[\mu_H,L_H]$ for positive $L_H$, $\mu_H$. 
\end{assumplist}
{\bf Optional damping.} Amongst possible choices for the damping, we focus on $\varepsilon(w)$ of the form:
\begin{align}\label{eq:damping}
	\varepsilon(w) = \alpha  \sigma^{2}(w)	%
\end{align}
where $\alpha$ is a non-negative number $\alpha \geq 0$, 
and $\sigma^{2}(w)$ is the smallest eigenvalue of the \emph{Neural Tangent Kernel} (NTK) $A_w:=J_wJ^{\top}_w$ which is invertible whenever $J_w$ is surjective. This choice allows us to study the impact of scalar damping on the convergence rate of the GN dynamics. While computing $\sigma^{2}(w)$ is challenging in practice, we emphasize that the damping is only optional and that the case when no damping is used, i.e. when $\alpha=0$, is covered by our analysis. 

\section{Convergence analysis}\label{seq:convergence_analysis}
\subsection{Global convergence under no-blow up}\label{sec:convergence_results}

We start by showing that the continuous-time dynamics in \cref{eq:preconditioned_flow} converge to a global optimum provided that it remains defined at all times. 
\begin{prop}\label{prop:convergence_no_blow_up}
Let $w_0= (v_0,u_0)\in \mathcal{W}$ be an initial condition so that $u_0$ satisfies \cref{assump:surj}. Under \cref{assump:A} and assuming that the activation $\gamma$ is twice-continuously differentiable, there exists a unique solution to the continuous-time dynamics in \cref{eq:preconditioned_flow} defined up to some maximal, possibly infinite, positive time $T$. 
	If $T$ is finite, we say that the dynamics {\bf blows-up} in finite time. 
	If, instead $T$ is infinite, then $f_{w_t}$ converges to a global minimizer of $\mathcal{L}$. Moreover, denoting by $\mu$ the strong convexity constant of $\mathcal{L}$ and defining  $\mu_{GN} := 2\mu/(L_{H}(1+\alpha/\mu_H))$, it holds that: 	\begin{align}\label{eq:linear_convergence}
	 \mathcal{L}(f_{w_t})&\leq \mathcal{L}(f_{w_0})e^{-\mu_{GN} t}.
	\end{align}
\end{prop}
\cref{prop:convergence_no_blow_up} is proved in \cref{sec:preliminary_results}. 
It relies on the Cauchy-Lipshitz theorem to show that the dynamics are well-defined up to a positive time $T$ and uses strong convexity of $\mathcal{L}$ to obtain the rate in \cref{eq:linear_convergence}. 
The result requires a smooth non-linearity, which excludes \verb+ReLU+. However, it holds for any smooth approximation to \verb+ReLU+, such as \verb+SiLU+. As we discuss in \cref{sec:experiments}, this does not make a large difference in practice as the approximation becomes tighter. 
The rate in \cref{eq:linear_convergence} is only useful when the dynamics is defined at all times which is far from obvious as the vector field $\Phi$ can diverge in a finite time causing the dynamics to explode. Thus, the main challenge is to find conditions ensuring the vector field $\Phi$ remains well-behaved which is the object of \cref{sec:absense_blowup}. 

{\bf Comparaison with gradient flow.} The rate in \cref{eq:linear_convergence} only depends on the strong-convexity constant $\mu$ of $\mathcal{L}$, the smoothness constant of $H$, and damping strength $\alpha$, with the fastest rate achieved when no damping is used, i.e. $\alpha=0$. For instance, when choosing $H$ to the identity and $\alpha=0$, the linear rate becomes $\mu_{GN}=2\mu$. 
This is in contrast with a gradient flow of $w\mapsto \mathcal{L}(f_w)$ for which a linear convergence result, in the kernel regime, follows from \cite[Theorem 2.4]{Chizat:2019a}:
$$
\mathcal{L}(f_{w_t}) \leq \mathcal{L}(f_{w_0})e^{\mu_{GD}t},
$$
with  $\mu_{GF} := \mu\sigma(w_0)/4$ where $\sigma(w_0)$ is the smallest singular value of the initial NTK matrix $A_{w_0}$. In practice,  $\mu_{GF}\ll \mu_{GN}$ since the linear rate $\mu_{GF}$ for a gradient flow is proportional to $\sigma(w_0)$, which can get arbitrarily small as the training sample size $N$ increases thus drastically slowing down convergence \cite{Robin:2022}.  

\subsection{Absence of blow-up for almost-optimal initial linear layer}\label{sec:absense_blowup} 
In this section, we provide a condition on the initialization $w_0=(v_0,u_0)\in \mathcal{W}$ ensuring the Gauss-Newton dynamics never blows up. 
We start with a simple result, with a proof in \cref{appendix:interpolation}, showing the existence of a vector $v^{\star}$ so that $w^{\star}:= (v^{\star},u_0)\in \mathcal{W}$  interpolates the training data whenever the over-parameterization assumption \cref{assump:surj} holds for a given $u_0$. 
\begin{prop}\label{prop:interpolation}
	Let $w_0= (v_0,u_0)\in \mathcal{W}$ be an initial condition so that $u_0$ satisfies \cref{assump:surj}. Define $\Gamma(u) := \parens{\gamma(u^{\top}x_n)}_{1\leq n\leq N}$ and $F^{\star} := (f^{\star}(x_1),..., f^{\star}(x_N))$ and let $v^{\star}$ be given by:
	$$
	v^{\star} = \Gamma(u_0)G(u_0)F^{\star}. 
	$$
Then the vector $w^{\star} = (v^{\star},u_0)\in \mathcal{W}$ is a global minimizer of $w\mapsto \mathcal{L}(f_w)$ and a fixed point of the Gauss-Newton dynamics, i.e. $\Phi(w^{\star}) = 0$. 
\end{prop}
The above result ensures a global solution can always be found by optimizing the linear weights while keeping the hidden weights fixed. In practice, this can be achieved using any standard optimization algorithm such as gradient descent since the function $v\mapsto l(v) = \mathcal{L}(f_{(v,u_0)})$ converges to a global optimum as it satisfies a Polyak-Lojasiewicz inequality (see \cref{prop:PL} in \cref{appendix:interpolation}). However, this procedure is not of interest here as it does not allow us to learn the hidden features. Nonetheless, since $w^{\star} = (v^{\star},u_0)$ is a fixed point of the Gauss-Newton dynamics, we can expect that an initial condition $w_0 = (v_0,u_0)$, where $v_0$ is close to $v^{\star}$,  results in a well-behaved dynamics. The following proposition, proven in \cref{proof_no_blow_up}, makes the above intuition more precise.  
\begin{prop}\label{prop:global_convergence}
	Let $w_0= (v_0,u_0)\in \mathcal{W}$ be an initial condition so that \cref{assump:surj} holds, i.e. the gram matrix $G(u_0)$ is invertible,  and denote by $\sigma_0^2$ the smallest eigenvalue of $G(u_0)$. Assume that the activation $\gamma$ is twice-continuously differentiable and that \cref{assump:A} holds. 
	Let $R$ be any arbitrary positive number and define $C_R= \sup_{ w \in \mathcal{B}(w_0,R) }\Verts{\partial_w J_w}_{op}$ where $\Verts{.}_{op}$ denotes the operator norm. 
	If the linear layer is almost optimal $v_0$ in the sense that: 
	\begin{align}\label{eq:optimality_cond}
	\Verts{\nabla \mathcal{L}(f_{w_0})}< \epsilon, \qquad \epsilon := (\mu\mu_H\mu_{GN}/8LN)\min(R,C_R^{-1})\min\parens{\sigma_0,\sigma_0^2},
		\end{align} 
		then \cref{eq:preconditioned_flow} is defined at all times, i.e.\ $T {=} +\infty$, the objective $\mathcal{L}(f_{w_t})$ converges at a linear rate to $0$ according to \cref{eq:linear_convergence} and the parameters $w_t$ remain within a ball of radius $R$ centered around $w_0$.
\end{prop}
\cref{prop:global_convergence} essentially states that the dynamics never blow up, and hence, converge globally at the linear rate in \cref{eq:linear_convergence} by \cref{prop:convergence_no_blow_up}, provided the hidden features are diverse enough and the initial linear weights optimize the objective well enough.  
As discussed in \cref{sec:over-parameterized}, the first condition on the hidden weights $u_0$ typically holds for a Gaussian initialization when the data are non-degenerate. Additionally, the near-optimality condition on the linear weights $v_0$ can always be guaranteed by optimizing the linear weights while keeping the hidden ones fixed. 

{\bf Remark 1.} In the proof of \cref{prop:global_convergence}, we show that the occurrence of a finite-time blow-up is tightly related to the Neural Tangent Kernel matrix $A_{w_t}$ becoming singular at $t$ increases which causes the vector field $\Phi$ to diverge.  
When the NTK matrix $A_{w_t}$ at time $t$ is forced to remain close to the initial one $A_{w_0}$, as is the case in the NTK limit considered in \cite{Cai:2019}, one can expect that the singular values of $A_{w_t}$ remain bounded away from $0$ since this is already true for $A_{w_0}$. 
Therefore, the main technical challenge is the study, performed in \cref{prop:blow-up_jacobian_pseudo_inverse,prop:dynamics_norm_pseudo_inverse,prop:dynamics_norm_pseudo_inverse_bound} of \cref{sec:control_ntk}, of the time evolution of the eigenvalues of $A_{w_t}$ in the \emph{mean-field limit}, where $A_{w_t}$ is allowed to differ significantly from the initial NTK matrix $A_{w_0}$. The final step is to deduce that, while the weights $w_t$ are allowed to be in a ball centered in $w_0$ of arbitrarily large radius $R$, the singular values remain bounded away from $0$ provided the initial linear weights $v_0$ satisfy the condition in \cref{eq:optimality_cond}. 

{\bf Remark 2.} Comparing \cref{prop:global_convergence} to the convergence results for gradient flows in the \emph{kernel regime} \cite[Theorem 2.4]{Chizat:2019a}, our result also holds for an infinite dimensional space of parameters such as in the \emph{mean-field limit} of one-hidden layer network. Moreover, while \cite[Theorem 2.4]{Chizat:2019a} requires the norm $w_t$ to be within a ball of fixed radius $R_0$ determined by initial smallest singular value $\sigma(w_0)$ of $A_{w_0}$, our result allows $w_t$ to be within arbitrary distance $R$ regardless of the initial $\sigma(w_0)$ provided the condition in \cref{eq:optimality_cond} holds.  \cref{prop:global_convergence} is more similar in flavor to the convergence result of the Wasserstein gradient flow for some probability functionals in \cite[Theorem 3.3, Corollary 3.4]{Chizat:2019} which requires the initial objective to be smaller than a given threshold.  

While the results of this section guarantee global convergence of the training objective, they do not guarantee learning features that generalize well.  Next, we empirically show that GN can exhibit regimes where it yields solutions with good generalization properties.

\section{Empirical study of generalization for a Gauss-Newton algorithm}\label{sec:experiments}
We perform a comparative study between gradient descent (GD) and Gauss-Newton (GN) method in the context of over-parameterized networks. Since we observed little variability when changing the seed, we have chosen to use a single seed for the sake of clarity in the figures when presenting the results of this section, while deferring the analysis for multiple seeds to  \cref{sec:ransom_seeds}. Additional experiments using MNIST dataset \cite{Deng:2012} are provided in \cref{sec:MNIST}. The results presented below were obtained by running 720 independent runs, each of which optimizes a network given a specific configuration on a GPU. The total time for all runs was 3600 GPU hours.

\subsection{Experimental setup}
We consider a regression task on a synthetic dataset consisting of $N$ training points  $(X_n,Y_n)_{1\leq n\leq N}$. The objective is to minimize the mean-squared error $\mathcal{L}(f_w)$, defined in \cref{eq:squared_loss}, over the parameters $w$ of a model $f_w$ that predicts the target values $Y_n$ based on their corresponding input points $X_n$.

{\bf Data generation.}
We generate $N$ i.i.d. samples denoted as $(X_n)_{1\leq n\leq N}$ from a standard Gaussian distribution of dimension $d=10$.  Each corresponding target $Y_n$ is obtained by applying a predefined function $f^{\star}$, referred to as the \emph{teacher network}, to the input $X_n$ (i.e., $Y_n = f^{\star}(X_n)$).
We choose $f^{\star}$ to be a one-hidden layer network of the form in \cref{eq:one_hidden} with $M^{\star} = 5$ hidden units $(v_i^{\star},u_i^{\star})_{1\leq i\leq M^{\star}}$ drawn independently from a standard Gaussian distribution. 
Furthermore, we consider two non-linearities $\gamma$ when defining $f^{\star}$: the widely used \verb+ReLU+ activation \cite{Nair:2010} for the main results and its smooth approximation \verb+SiLU+ for additional ablations in \cref{sec:non_linearity}. This choice of target function $f^{\star}$ results in a hidden low-dimensional structure in the regression task as $f^{\star}$ depends only on $5$-dimensional linear projection of the input data. 
In most cases, we take the size of the training data to be $N=500$ except when studying the effect of the training size. This choice allows us to achieve a balance between 
conducting an extensive hyper-parameter search and being able to compute precise GN updates on the complete training data within a reasonable timeframe. 
Finally, we generate $10000$ test samples to measure the generalization error.   

{\bf Model.}
We consider a well-specified setting where the model $w\mapsto f_w$, referred to as the \emph{student network}, is also a one-hidden layer with the same activation function $\gamma$ as the \emph{teacher network} $f^{\star}$. Importantly, the student network possesses a number $M=5000$ of hidden units, denoted as $w:= (v_i,u_i)_{1\leq i\leq M}$, which is much larger than the teacher's number of units $M^{\star}=5$ and allows fitting the training data perfectly (over-parameterized regime). Following \cite{Chizat:2019a}, we also normalize the output of the student $f_w$ by $M$ to remain consistent with the \emph{mean-field limit} as $M$ increases. 

{\bf Initialization.}
We initialize the student's hidden units according to a centered Gaussian with standard deviation (std) $\tau_0$ ranging from $10^{-3}$ to $10^{3}$. Varying the initial std $\tau_0$ allows us to study the optimization dynamics in two regimes: the \emph{kernel regime} (large values of $\tau_0$) and the \emph{feature learning regime} (small values of $\tau_0$). Finally, we initialize the weights of the last layer to be $0$. This choice allows us to perform a systematic comparison with the minimum norm least squares solution obtained using random features as described next. 

{\bf Methods and baselines.}
We consider three optimization methods for the model's parameters: a regularized version of Gauss-Newton (GN), gradient descent (GD), and optimizing the linear layer parameters alone, which can be viewed as a random features (RF) model.\\
	{\bf{(GN):}} 
	We use the discrete GN updates in \cref{eq:preconditioned_flow} with a constant step-size $\lambda$ and $H_w=I$. Each update is obtained using Woodbury's matrix identity by writing $\Phi(w_k) {=} J_{w_k}^{\top}z_k$ with $z_k$ the solution of a linear system  $\parens{J_{w_k}J_{w_k}^{\top} {+}\epsilon(w_k)I} z_k {=} \nabla \mathcal{L}(f_{w_k})$ of size $N$. 
	Here, we use the damping defined in \cref{eq:damping} with $\alpha{=}1$ and ensure it never falls below $\epsilon_0=10^{-7}$ to avoid numerical instabilities. In practice, we found that small values of $\epsilon_0$ had little effect on the results (see \cref{fig:loss_vs_M} (Right) of \cref{sec:ransom_seeds}).\\  
	{\bf{(GD):}} The model's parameters $w$ are learned using gradient descent with a constant step-size $\lambda$.\\ 
	{\bf{(RF):}} 
    Instead of optimizing the entire parameter vector $w = (v,u)$ of the model $f_w$, we selectively optimize the parameter $v$ of the linear layer while keeping the weights $u$ of the hidden layer constant.
	This procedure corresponds to computing a minimal-norm least squares solution $v^{RF}$ for a random features (RF) model, where the features are obtained from the output of the hidden layer at initialization.  
    Specifically, the solution $v^{RF}$ is obtained as 
    \begin{equation} \label{eq:random_features}
        v^{RF} = \Gamma(u)(\Gamma(u)^{\top}\Gamma(u))^{\dagger}\mathbb{Y},
    \end{equation} where $\Gamma(u) = (\gamma(u^{\top}X_n))_{1\leq n\leq N} \in \mathbb{R}^{M\times N}$ is the feature vector computed over the training data, and $\mathbb{Y}$ is a vector of size $N$ consisting of the target values $Y_n$. 
	The choice of the un-regularized solution as a baseline is supported by recent findings \cite{Hastie:2022}, demonstrating its good generalization in the over-parameterized regime.

{\bf Stopping criterion.}
For both (GD) and (GN), we perform as many iterations as needed so that the final training error is at least below $10^{-5}$. Additionally, we stop the algorithm whenever the training error drops below $10^{-7}$ or when a maximum number of iterations of $K^{GD}= 10^6$ iterations for (GD) and $K^{GN}=10^5$ iterations for (GN) is performed. For (RF) we solve the linear system exactly using a standard linear solver.  

\subsection{Performance metrics}

In addition to the training and test losses, we introduce two complementary metrics to assess the quality of the hidden features $\Gamma(u)$ learned by the student model. 

{\bf Weighted cosine distance (WCD).} 
Measuring proximity between the student and teacher's hidden weights $(u_1,...,u_M)$ and $(u_1^{\star},...,u_{M^{\star}}^{\star})$ 
requires being able to find a correspondance between each student's parameter $u_i$ and a teacher's one $u_j^{\star}$. 
When using a positively homogeneous non-linearity such as \verb+ReLU+, only the alignment between vectors is relevant. However, since the student's model is vastly over-parameterized, there could be vanishing weights $u_i=0$ which do not influence on the network's output, while the remaining weights are well aligned with the teacher's weights. We introduce the following weighted distance to account for these specificities: 
\begin{align}\label{eq:WCD}
	\text{WCD}(u,u^{\star}) = 2\sum_{i=1}^M p_i \parens{1- \max_{1\leq j\leq M^{\star}}\frac{u_i^{\top}u^{\star}_j}{\Verts{u_i}\Verts{u_j^{\star}}}},  \qquad p_i = \frac{\Verts{u_i}^2}{\sum_{k=1}^M \Verts{u_k}^2}.
\end{align}
Equation \cref{eq:WCD} finds a teacher's parameter $u_j^{\star}$ that is most aligned with a given student's parameter $u_i$ and downweights its cosine distance if $u_i$ has a small norm.
In practice, we found this measure to be a good indicator for generalization. 

{\bf Test loss after linear re-fitting (Test-LRfit).}\label{par:linear_refit}
To evaluate the relevance of the hidden features $\Gamma(u)$, we train a new linear model $\Tilde{v}$ on those features and measure its generalization error. In practice, we freeze the weights $u$ of the hidden layer and fit the last layer $v$ using the procedure \cref{eq:random_features} described above for the random features (RF) baseline. In our over-parameterized setting, this new model should always achieve perfect training loss, but its generalization error strongly depends on how the features were learned. %

\subsection{Numerical results}

{\bf Implicit bias of initialization.}
\begin{figure}[ht]
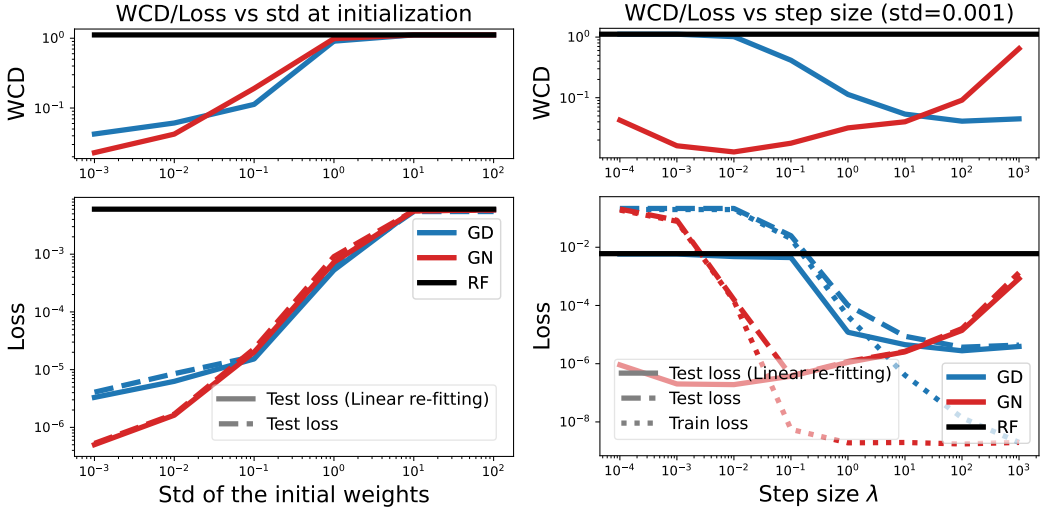

\vskip 0.2in
\begin{center}
\includegraphics[width=.5\columnwidth]{figures/effect_sigma_NL_ReLU}%
\includegraphics[width=.5\columnwidth]{figures/effect_lr_NL_ReLU_std_0.001_beta_1}
\caption{
Final values of various metrics vs std of the hidden layer's weights at initialization $\tau_0$ (left) and step size (right) for a ReLU  network.  
(Left figure) The training size is set to $N=500$ while $\tau_0$ ranges from $10^{-3}$ to $10^2$.  For both GD and GN, results are reported for the best-performing step-size $\lambda$ selected according to the test loss on a regular logarithmic grid ranging from $10^{-3}$ to $10^3$. (Right figure) The std of the weights at initialization is set to $\tau_0=10^{-3}$. 
All models are optimized up to a training error  of $10^{-6}$ or until the maximum number of steps is exceeded, ($M=5000$ , $N=500$).
}
\label{fig:loss_vs_std}
\end{center}  
\vskip -0.2in
\end{figure} 
\cref{fig:loss_vs_std}(Left) shows that GN and GD generalize more or less well compared to a random features model depending on the variance $\tau_0$ of the weights at initialization. 
First, in the case of the RF model, changing $\tau_0$ barely affects the final training and test error. This is essentially due to the choice of the non-linearity \verb+ReLU+ which is positively homogeneous. 
Second, for large $\tau_0$, the final test error of both GN and GD matches that of RF suggesting that the dynamics were running under the \emph{kernel regime/lazy regime} \cite{Jacot:2018,Chizat:2019a} where the final layer's weights are learned without changing the hidden features much. 
Finally, for small values of $\tau_0$, both GN and GD obtained a better test error than RF which can only be explained by learning suitable hidden features. 
These results are further confirmed when varying the size of the training set as shown in \cref{sec:effect_sample_size}. 
In \cref{sec:non_linearity}, we performed the same experiment using \verb+SiLU+ non-linearity instead of \verb+ReLU+ and observed the same transition between the kernel and feature learning regimes. 
While prior works such as \cite{Cai:2019} analyzed Gauss-Newton in the kernel learning regime, our results indicate that Gauss-Newton can also exhibit a feature learning regime for small values of $\tau_0$. 

{\bf Implicit bias of the step size.}
\cref{fig:loss_vs_std}(Right)  shows that increasing the step size in Gauss-Newton results in features that do not generalize well. This is unlike gradient descent where larger step sizes yield better-performing features\footnote{The step size can be taken as large as possible provided the algorithm does not diverge. In our experiments, we found that going beyond a step size of $\lambda = 10^3$ for GD results in a divergence of the algorithm}. 
This behavior is first observed in the top figures showing an increasing \emph{weighted cosine distance} (WCD) between the student's hidden weights and the teacher's weights as the step size increases in the case of Gauss-Newton. On the contrary, the distance decreases with larger step sizes in the case of gradient descent, indicating that the algorithm learns better features. 
This effect is also confirmed when computing the test loss after refitting the linear layer to the training data exactly while keeping the learned hidden weights fixed (see linear re-fitting in \cref{par:linear_refit}). 
Several works, such as \cite{Mulayoff:2021,Andriushchenko:2022}, highlight the importance of taking large step-sizes for gradient descent, to improve generalization, our results show that Gauss-Newton benefits instead of using smaller step-sizes. 
As shown in \cref{sec:non_linearity}, this behavior persists when using a different non-linearity, such as \verb+SiLU+ with different values for $\beta$ ($1$ and $10^6$). Interestingly, we observed that GN might underperform GD as the parameter $\beta$ defining \verb+SiLU+ decreases making \verb+SiLU+ nearly linear and less like \verb+ReLU+. 
In all cases, while the final test loss remains large for small step sizes (in both GD and GN), the test loss after linear re-fitting is much lower in the case of Gauss-Newton, indicating that the algorithm was able to learn good features while the last layer remained under-optimized. %

\begin{figure}[ht]
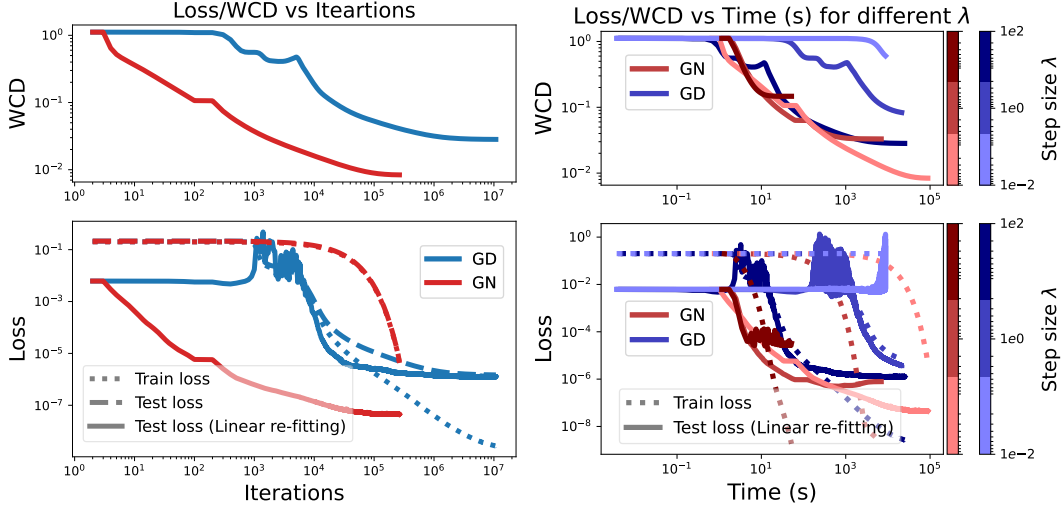

\begin{center}
\includegraphics[width=.5\columnwidth]{figures/Evolution_train_iter_NL_ReLU}%
\includegraphics[width=.52\columnwidth]{figures/Evolution_vs_lr_train_time_NL_ReLU_std_0.001}
\caption{Evolution of various metrics during training for both GD and GN. Both the variance of the weights at initialization $\tau_0$ and step-size $\lambda$ are selected from two sets $\{10^{-3},1,10^2\}$ and $\{10^{-2},1,10^2\}$. In all cases, we report results for the best-performing choice of $\tau_0$ (achieved for $\tau_0=10^{-3}$) based on test error after linear re-fitting (Test-LRfit). Both Top-Left and Bottom-Left figures show the evolution of the WCD defined in \cref{eq:WCD} and the three metrics (the training loss, the test loss and Test-LRfit) for both GD and GN using the step-size $\lambda$ achieving the lowest Test-LRfit.  Top-Right and Bottom-Right figures show the evolution of WCD, train loss and test loss after linear re-fitting for the various choices of step-size (darker colors correspond to larger step-sizes). }
\label{fig:loss_vs_iter}
\end{center} 
\vskip -0.2in
\end{figure} 
{\bf Hidden feature learning.} 
\cref{fig:loss_vs_iter} (Left) shows the evolution of various metrics, including the WCD, the training, and test errors with the number of iterations for both GN and GD. For each method, results for the best step size and variance at initialization are presented ($(\tau_0,\lambda) = (10^{-3},10^{2})$ for GD and  $(\tau_0,\lambda) = (10^{-3},10^{-2})$ for GN). 
The bottom figure clearly indicates that the early iterations of GN essentially optimize internal features first while barely improving the last layer. 
This can be seen deduced from the training and test losses which remain almost constant at the beginning of optimization, while the test loss after linear re-fitting steadily decreases with the iterations. The last layer is learned only towards the end of training as indicated by a faster decrease in both train and test objectives. 
Gradient descent displays a different behavior with test loss before and after linear re-fitting remaining close to each other during optimization. %
 
{\bf Better generalization of GN comes at the cost of a slower training.}
\cref{fig:loss_vs_iter} (Right) shows the evolution of the same quantities as in the left figures for $\tau_0=10^{-3}$ as a function of time (in seconds) and for the three values of the step-size $\lambda$ ($10^{-2}$ (bright colors) $1$ (medium colors) and $10^2$ (dark colors)). 
The figure shows that, while the training loss converges much faster for larger step sizes in the case of GN, it does not result in the best generalization error as measured by the test loss after linear re-fitting.  
Hence, better generalization comes at the cost of a slower convergence of the training loss. Again, the behavior is quite different in the case of gradient descent for which larger step sizes result in both faster optimization of the training objective and better generalization.

\section{Conclusion}

The results illustrate the importance of the initialization and choice of the step size for the generalization performance of GN when optimizing over-parameterized one-hidden layer networks. 
While our theoretical analysis shows that GN can reach a global solution of the training objective when starting from a close-to-optimal initialization, this result does not imply anything about the generalization properties of the obtained solution or the quality of the learned features. 
Our empirical study instead shows that GN may favor feature learning (thus yielding improved generalization) at the cost of a slower optimization of the training objective due to the use of a smaller step size. 
Providing a theoretical analysis of these phenomena is an interesting direction for future work. 
Finally, our study shows that the training or test error may not always be a good indicator of the quality of the learned features due to an under-optimized final layer. Instead, a test/validation error \emph{after re-fitting} the linear layer may be a better indicator for the quality of the learned features which can be of practical use. 

\section*{Acknowledgments}
The project leading to this work has received funding from the French
National Research Agency (ANR-21-JSTM-0001) in the framework of the
``Investissements d'avenir'' program (ANR-15-IDEX-02). This work was
granted access to the HPC resources of IDRIS under the allocation
2023-AD011013762R1 made by GENCI.

\printbibliography

\clearpage
\appendix

\section{Background:  NTK vs mean-field limits, kernel vs feature learning regimes}\label{appendix:background}
We provide a brief discussion of the notions of NTK and mean-field limits as well as the related convergence regimes for a gradient flow. 
Given a non-polynomial point-wise and differentiable activation function $\gamma$ and some positive integer $M$, consider the following one-hidden network:
\begin{align}\label{eq:one_hidden}
	f_w(x) = \alpha(M)\sum_{i=1}^M v_i \gamma(u_i^{\top}x), \qquad w = (v,u)\in \mathcal{W} := \R^{M}\times \R^{M\times d},
\end{align}
where $\alpha(M)$ is a scaling factor that depends on the number of neurons $M$. 
For simplicity, we assume that the weights $(v_i,u_i)$ are i.i.d. samples from a Gaussian. 
We are interested in the dynamics of a gradient flow when $M$ increases to $+\infty$ and for specific choices of the scaling factor $\alpha(M)$. 
First, let's introduce some notations. We denote by $F(w)$ the vector of evaluation of the network on the training data $F(w) =(f_w(x_1),\dots, f_w(x_N))$ and by $J_w$ the Jacobian of $F(w)$ w.r.t.\ parameter $w$. Moreover, we introduce the NTK matrix $A_w:= J_wJ_w^{\top}$ which is an $N\times N$ symmetric positive semi-definite matrix. Finally, $\nabla\mathcal{L}(f_{w})$ denotes the vector of size $N$ representing the gradient of $\mathcal{L}$ w.r.t.\ $F(w)$. The gradient flow of an objective $\mathcal{L}(F_w)$ is given by:
$$
\dot{w}_t = - J_{w_t}\nabla_{F} \mathcal{L}(F_{w_t}),
$$
Additionally, it is often informative to track the time evolution of the vector $F_{w_t}$, which is given by the following ODE:
\begin{align}\label{eq:evolution_F}
\partial_t F_{w_t} = - J_{w_t}J_{w_t}^{\top}\nabla_F \mathcal{L}(F_{w_t}) = - A_{w_t}\nabla_F \mathcal{L}(F_{w_t}).
\end{align}
One can see from \cref{eq:evolution_F} that the NTK matrix $A_{w_t}$ arises naturally when expressing the dynamics of the function $F_{w_t}$. 
We can even provide an explicit expression of $A_{w_t}$ using the NTK kernel, which is a positive semi-definite kernel $K_w(x,x')$ defined by the inner product: 
$$
K_w(x,x') = \partial_w f(x) \partial_w f(x')^{\top} = \alpha(M)^2\sum_{i=1}^M\parens{ \gamma(u_i^{\top}x)\gamma(u_i^{\top}x') + (v_i)^2\gamma'(u_i^{\top}x)\gamma'(u_i^{\top}x')x^{\top}x'}.  
$$
The NTK matrix $A_w$ is then formed by collecting the values $(K_w(x_i,x_j))_{1\leq i,j\leq N}$ on the training data. 
It is easy to see that when $\alpha(M) = \frac{1}{\sqrt{M}}$, then $K_w(x,x')$ converges to an expectation under the distribution of the weights $(v_i,u_i)_{1\leq i\leq M}$ under mild conditions. In this case, $A_w$ depends on the choice of the distribution of the weights $(v_i,u_i)_{1\leq i\leq M}$ and not on the particular samples. On the other hand, when $\alpha(M) = \frac{1}{M}$, it is the function $f_{w}$ itself that converges to an expectation under the distribution of the weights. This distinction will lead to different qualitative dynamics for the gradient flows as we discuss next. 

\subsection{The Neural Tangent Kernel limit and the kernel regime ($\alpha(M)= \frac{1}{\sqrt{M}}$)} 
When choosing $\alpha(M) = \frac{1}{\sqrt{M}}$ and setting $M\rightarrow +\infty$, it is shown in \cite{Jacot:2018} that the NTK matrix $A_{w_t}$ remains constant in time and depends only on the distribution of the weights $w_0$ at initialization. Hence, $F_{w_t}$ is effectively the solution of the ODE:
$$
\partial_t F_t = -A_{w_0} \nabla_F \mathcal{L}(F_t), \qquad F_0 = F_{w_0},
$$
which converges at a linear rate to a global minimizer of $\mathcal{L}$ whenever $A_{w_0}$ is invertible and $\mathcal{L}$ is strongly convex. The NTK limit results in a kernel regime, where the resulting solution is equivalent to the one obtained using a kernel method with the initial NTK kernel $K_{w_0}(x,x')$ \cite{Jacot:2018,Chizat:2019a}.

\subsection{The mean-field limit and the feature learning regime ($\alpha(M)= \frac{1}{M}$)} When choosing instead $\alpha(M) = \frac{1}{M}$, it is possible to view the limiting function as an expectation over weights drawn from some probability distribution $\rho$
$$
f_{\rho}(w) = \int v \gamma(u^{\top}x)\diff \rho(v,u).
$$
In \cite{Wojtowytsch:2020a}, it is further shown that such a formulation is essentially equivalent to the one considered in \cref{eq:infinite_width_networks}. As a result, the time evolution in \cref{eq:evolution_F} does not simplify as in the NTK limit, since the NTK matrix $A_{w_t}$ (and NTK kernel $K_{w_t}$) is allowed to evolve in time. This is a limit in which \emph{feature learning} is possible since the dynamics do not necessarily simplify to a kernel method. 

\section{Proofs}\label{sec:proofs}
{\bf Notations.} For an operator $A$ between two Hilbert spaces, denote by $\Verts{A}_{op}$ its operator norm and by $\Verts{A}_F$ its Frobenius norm and let $\sigma^{\star}(A)$ be its smallest singular value whenever it is well-defined. Denote by $\mathcal{B}(w,R)$ the open ball of radius $R$ centered at an element $w$ in a Hilbert space.

\subsection{Over-parameterization implies interpolation.}\label{appendix:interpolation}
\begin{proof}[Proof of \cref{prop:interpolation}]
	Denote by $F^{\star} = (f^{\star}(x_1),..., f^{\star}(x_n))$. First, it is easy to see that the matrix $\partial_v f_{w}\partial_v f_w^{\top}$ is exactly equal to the $N\times N$ matrix $G(u_0)$ considered in \cref{assump:surj}, which is assumed to be invertible. Moreover, $\partial_v f_{w}$ is independent of $v$ since $f_{(v,u_0)}$ is linear in $v$. Hence, we can simply choose:  
	$$v^{\star} = (\partial_v f_{(v,u_0)})^{\dagger}F^{\star} = (\partial_v f_{(v,u_0)})^{\top} (\partial_v f_{(v,u_0)}\partial_v f_{(v,u_0)}^{\top})^{\dagger}F^{\star} = (\partial_v f_{(v,u_0)})^{\top} G(u_0)^{-1}F^{\star}.
	$$ 
Moreover, using again the linearity of $f$ in $v$, we have that $$f_{(v^{\star},u_0)} = \partial_v f_{(v^{\star},u_0)} v^{\star} = G(u_0)G(u_0)^{-1}F^{\star} = F^{\star}.$$ 
We have therefore shown that $w^{\star} = (v^{\star},u_0)$ is a global minimizer of $w\mapsto \mathcal{L}(f_{w})$. Finally, since $\nabla\mathcal{L}(f_{w^{\star}})= \nabla\mathcal{L}(f^{\star})= 0$, it follows that $\phi(w^{\star}) = 0$. 
\end{proof}

\begin{prop}\label{prop:PL}
	Assume that \cref{assump:surj} holds for some vector $u_0$.  Then, the function $v\mapsto l(v) = \mathcal{L}(f_{(v,u_0)})$ is convex and satisfies a Polyak-Lojasiewicz inequality:
	$$
	\Verts{\nabla_u l(u)}^2 \geq 2\mu\sigma_0^2 \parens{l(u)-\mathcal{L}({f^{\star}})}.
	$$
	with $\mu$ the strong convexity constant of $\mathcal{L}$ and $\sigma_0^2$ the smallest singular value of the matrix $G(u_0)$ appearing in \cref{assump:surj}.
\end{prop}
\begin{proof}
		It is easy to see that $l(v)$ is convex as a composition of a strongly convex function $\mathcal{L}$ and a linear function $v\mapsto f_{(v,u_0)}$. 
		We will show now that $l$ satisfies a Polyak-Lojasiewicz  (PL) inequality. To this end, we make the following calculations:
\begin{align}
	\Verts{\nabla_u l(u)}^2 = \nabla_f \mathcal{L}(f_w)^{\top} \partial_v f_{w}\partial_v f_w^{\top}\nabla_f \mathcal{L}(f_w).
\end{align}
It is easy to see that the matrix $\partial_v f_{w}\partial_v f_w^{\top}$ is exactly equal to the $N\times N$ matrix $G(u_0)$ considered in \cref{assump:surj}, which is assumed to be invertible. Hence, its smallest singular value $\sigma^2_{0}$ is positive and the following inequality holds:
\begin{align}\label{eq:pl_1}
	\Verts{\nabla_u l(u)}^2\geq \sigma_0^2 \Verts{\nabla_f \mathcal{L}(f_w)}^2.
\end{align} 
On the other hand, since $\mathcal{L}$ is strongly convex and differentiable, then it satisfies the following PL inequality:
\begin{align}\label{eq:pl_2}
	\Verts{\nabla_f \mathcal{L}(f_w)}^2\geq 2\mu \parens{\mathcal{L}(f_w)-\mathcal{L}({f^{\star}})}, 
\end{align}
where $\mu$ is the strong convexity constant of $\mathcal{L}$. Hence, we get the desired inequality by combining \cref{eq:pl_1,eq:pl_2}:
$$
\Verts{\nabla_u l(u)}^2 \geq 2\mu\sigma_0^2 \parens{l(u)-\mathcal{L}({f^{\star}})}.
$$
\end{proof}

\subsection{Global convergence under no blow-up: Proof of \cref{prop:convergence_no_blow_up}}\label{sec:preliminary_results}
In this section we prove the global convergence result under no blow-up stated in \cref{prop:convergence_no_blow_up}. 
We start by stating the smoothness assumption on the activation function $\gamma$ as we will be using it in many places. 
\begin{assumplist}[resume]
\item\label{assump:diff} The non-linearity $\gamma$ is twice-continuously differentiable.
\end{assumplist}
As a first step, we need to prove a local existence and uniqueness of the solution to the ODE in \cref{eq:preconditioned_flow} which results from Cauchy-Lipschitz theorem applied to the vector field $\Phi$. 
To this end, a key step is to show that $\Phi$ is locally Lipschitz near initialization when the Jacobian $J_{w_0}$ is surjective. 
\begin{prop}[Regularity of $\Phi$.]\label{prop:regularity_phi}
	Assume \ref{assump:A} and \ref{assump:diff} 
	and that $w_0$ satisfies \cref{assump:surj}. Then, there exists a neighborhood of $w_0$ so that $\Phi(w)$ is locally Lipschitz in $w$. 
\end{prop}
\begin{proof}[Proof of \cref{prop:regularity_phi}]
By application of the Woodbury matrix identity, $\Phi$ can be equivalently written as:
	\begin{align}
		\Phi(w) = J_w^{\top}\parens{J_wJ_w^{\top}+\varepsilon(w)H_w^{-1}}^{-1}H_{w}^{-1}\nabla \mathcal{L}(f_w).
	\end{align}
	Moreover, by the smoothness assumption, $J_w$, $H_w$, $\epsilon(w)$ and $\nabla\mathcal{L}(f_w)$ are all differentiable in $w$. 
	Additionally, since $J_{w_0}$ is surjective, it must hold that $\sigma^{2}(w)>0$ in a neighborhood of $w_0$ so that $J_wJ_w^{\top}$  remains invertible. 
	Finally, since $H_w$ is always positive, it holds that $\parens{J_wJ_w^{\top}+\varepsilon(w)H_w^{-1}}^{-1}$ and $H_w^{-1}$ are differentiable in a neighborhood of $w_0$. Therefore, $\Phi$ is Locally Lipschitz in a neighborhood of $w_0$.
\end{proof}
The next proposition establishes the existence and uniqueness of the dynamics defined in \cref{eq:preconditioned_flow}.
\begin{prop}[Local existence]\label{prop:existence_flow}
	Under the same conditions as \cref{prop:regularity_phi}, there exists a unique solution $(w_t)_{t\geq 0}$ to \cref{eq:preconditioned_flow} defined up to a, possibly infinite, maximal time $T>0$.
\end{prop}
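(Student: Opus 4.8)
The plan is to invoke the Cauchy--Lipschitz (Picard--Lindel\"of) theorem for ordinary differential equations on Hilbert spaces. The first point to settle is the natural domain of $\Phi$: the pre-conditioner $(J_w^{\top}H_w J_w + \varepsilon(w)I)^{-1}$, together with the damping $\varepsilon(w)$ built from $\sigma^2(w)$ (the smallest eigenvalue of the NTK $A_w = J_w J_w^{\top}$), only makes sense on the set $\mathcal{O} := \{w \in \mathcal{W} : J_w \text{ is surjective}\}$. I would first record that $\mathcal{O}$ is open and contains $w_0$: since the codomain of $J_w$ is finite dimensional (we identify $L_2(\PP)$ with $\mathbb{R}^N$ here), $J_w$ is surjective iff the symmetric matrix $A_w$ has a positive smallest eigenvalue; by \cref{assump:diff} the map $w\mapsto J_w$ is continuous into the bounded operators, hence $w\mapsto A_w$ is continuous and $w\mapsto \sigma^{\star}(A_w)$ is continuous (the smallest eigenvalue is $1$-Lipschitz in operator norm), so $\mathcal{O}=\{\sigma^{\star}(A_w)>0\}$ is open and $w_0\in\mathcal{O}$ by the assumption that $J_{w_0}$ is surjective.

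By \cref{prop:regularity_phi} there is a radius $R>0$ with $B(w_0,R)\subseteq\mathcal{O}$ on which $\Phi$ is Lipschitz, and shrinking the ball we fix $r\in(0,R)$ and constants $\Lambda,M$ so that $\Phi$ is $\Lambda$-Lipschitz and bounded by $M$ on $\overline{B(w_0,r)}$. A standard Picard iteration on the complete metric space $C([0,\tau];\overline{B(w_0,r)})$ with the uniform norm, for $\tau := \min\{r/M,\, 1/(2\Lambda)\}$, shows the integral equation $w_t = w_0 + \int_0^t\Phi(w_s)\,\diff s$ has a unique solution, which is $C^1$ and solves \cref{eq:preconditioned_flow}; this already gives $T>0$. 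Uniqueness on an arbitrary interval follows from Gr\"onwall's inequality applied to $t\mapsto\|w_t-\tilde w_t\|$ for two solutions sharing the initial condition, as long as both remain in $\mathcal{O}$ where $\Phi$ is locally Lipschitz.

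To obtain a maximal solution, set $T := \sup\{\tau>0 : \text{\cref{eq:preconditioned_flow} admits a solution on } [0,\tau)\}$; the local step gives $T>0$, and uniqueness lets the solutions on different intervals be glued into a single solution $(w_t)_{t\in[0,T)}$ that admits no extension, with $T$ either finite or $+\infty$. None of this is genuinely hard; the only points needing care are (i) checking that $\mathcal{O}$ is open, so that the local Lipschitz property of \cref{prop:regularity_phi} can be transported along the whole trajectory rather than only near $w_0$, and (ii) noting that in infinite dimensions the usual ``the solution leaves every compact set as $t\to T$'' blow-up criterion is not available — but it is also not needed here, since the statement only claims a maximal $T\in(0,+\infty]$. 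Ruling out finite $T$ under the $\epsilon$-optimal initialization is precisely the job of the convergence analysis that follows.
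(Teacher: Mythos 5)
Your proof is correct and takes essentially the same route as the paper, which dispatches the proposition in a single sentence as ``a direct consequence of the Cauchy--Lipschitz theorem'' using the local Lipschitz property from \cref{prop:regularity_phi}. Your version merely spells out what the paper leaves implicit — the open domain $\{w : J_w \text{ surjective}\}$, the Picard iteration, the gluing of local solutions into a maximal one, and the (correct) observation that in infinite dimensions one only needs a maximal $T\in(0,+\infty]$ and not the compact-escape blow-up criterion.
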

\cref{prop:existence_flow} is a direct consequence of the Cauchy-Lipschitz theorem which holds since the vector field $\Phi(w)$ is locally Lipschitz in $w$ in a neighborhood of $w_0$ by \cref{prop:regularity_phi}. 
The above local existence result does not exclude situations where the dynamics blow up in finite time $T<+\infty$ and possibly fail to globally minimize the objective $\mathcal{L}$. 
We will show, later, that finite-time blow-ups never occur under additional assumptions on the initial condition.
\begin{proof}[Proof of \cref{prop:convergence_no_blow_up}]
	Existence and uniqueness of the continuous-time dynamics up to a maximal time $T>0$ follows from \cref{prop:existence_flow}. It remains to obtain the global convergence rate when $T$ is infinite. 
	For simplicity, we will ignore the dependence on $w_t$ in what follows and will write $b:= \nabla\mathcal{L}(f_{w_t})$. 
	By differentiating the objective in time, we get the following expression:
	\begin{align*}
		\partial_t [\mathcal{L}(f_{w_t})] &= -b^{\top}J\parens{J^{\top}HJ+ \varepsilon I}^{-1}Jb\\
		  &= -b^{\top}H^{-1}b + \varepsilon b^{\top}H^{-1}\parens{JJ^{\top}+ \varepsilon H^{-1}}^{-1}H^{-1}b\\
		  &= -b^{\top}H^{-1}b + \varepsilon b^{\top}H^{-\frac{1}{2}}\parens{H^{-\frac{1}{2}}\parens{JJ^{\top}+ \varepsilon H^{-1}}^{-1} H^{-\frac{1}{2}}}H^{-\frac{1}{2}}b\\
			&\leq - b^{\top}H^{-1}b\parens{1- \varepsilon \Verts{H^{-\frac{1}{2}}\parens{JJ^{\top}+ \varepsilon H^{-1}}^{-1} H^{-\frac{1}{2}}}_{op} }\\
			&\leq - b^{\top}H^{-1}b \underbrace{\parens{1- \frac{\varepsilon}{\varepsilon + \sigma^{\star}(H)\sigma^{\star}(JJ^{\top})}}}_{\geq \eta := (1+\alpha/\mu_H)^{-1}}\\
			&\leq -\eta b^{\top}H^{-1}b\leq -\eta L^{-1}_H\Verts{b}^2\leq -2\eta\mu L_H^{-1}\mathcal{L}(f_{w_t}).
	\end{align*}
	We used the Woodbury matrix inversion lemma to get the second line while the third and fourth lines use simple linear algebra. To get the fifth line, we used \cref{lem:upper_bound_op_A} to upper-bound the operator norm appearing in the fourth line. The last line follows by definition of $\varepsilon(w)$, the fact that $H\leq L_H I$ and the fact that $\frac{1}{2}\Verts{\nabla\mathcal{L}(f)}^2\geq \mu \mathcal{L}(f)$ by $\mu$-strong convexity of $\mathcal{L}$. This allows to directly conclude that $\mathcal{L}(f_{w_t})\leq \mathcal{L}(f_{w_0})e^{-\mu_{GN} t}$, with $\mu_{GN} := 2\eta\mu L_{H}^{-1}$. 
\end{proof}
\subsection{Control of the singular values of NTK matrix}\label{sec:control_ntk}
In this section, we are interested in the evolution of the smallest singular value of the NTK matrix $A_{w_t} := J_{w_t}J_{w_t}^{\top}$. 
For any arbitrary positive radius $R>0$, define the constant $C_R$ and stopping time $T_R$ as follows:
\begin{gather*}
	C_R := \sup_{ w \in \mathcal{B}(w_0,R) }\Verts{\partial_w J_w}_{op} < +\infty\\
	T_R := \sup\braces{ t\geq 0 \quad \middle|  \Verts{w_t-w_0}< R,  \text{ and } \sigma^{\star}(A_{w_t})>0 },
\end{gather*}
The stopping time $T_R$ characterizes the smallest-time when the dynamics becomes either degenerate ($\sigma^{\star}(A_{w_t})=0$) or grows too far away from initialization ($w_t\neq \mathcal{B}(w_0,R)$). 

To control the singular values of $A_{w_t}$,  we find it useful to consider the dynamics of the Frobenius norm of the pseudo-inverse  $J_w^{\dagger}$ which we denote by $a_t := \Verts{J_t^{\dagger}}_F$. 

Indeed, simple linear algebra provides the following lower-bound on $\sigma^{\star}(A_{w_t})$ in terms of $a_t$:
\begin{align}\label{eq:lower_bound_sigma}
	\sigma^{\star}(A_{w_t})\geq a_t^{-2}.
\end{align}
Note that, while the space of parameters can even be infinite-dimensional (mean-field limit), $J_w$ has a finite-dimensional range and thus always admits a pseudo-inverse $J_w^{\dagger}$ given by:
$$
	J_w^{\dagger}= J_w^{\top}(J_wJ_w^{\top})^{\dagger}. 
$$
The next proposition provides a differential inequality for  $a_t = \Verts{J_{w_t}^{\dagger}}_F$ in terms of $\Verts{\dot{w_t}}$ that holds for all times smaller than $T_R$.
\begin{prop}\label{prop:dynamics_norm_pseudo_inverse_bound}
Assume \ref{assump:A} and \ref{assump:diff} and that $w_0$ satisfies \cref{assump:surj},  then for any $R>0$, the time $T_R$ is positive and smaller than the maximal time $T$, i.e.\ $0<T_R\leq T $. Moreover, for any $t\in [0,T_R)$, $a_t$ satisfies: 
$$
	 \verts{\dot{a}_t} \leq a_t^2 C_R \Verts{\dot{w}_t}, 
$$
\end{prop}
\begin{proof}
{\bf Positive stopping time $T_R$.} 
By construction, $T_R$ must be smaller than $T$ as it requires $w_t$ to be well-defined.  Moreover, $J_{w_0}$ is surjective by assumption and $J_w$ is continuous in $w$, therefore $J_{w_t}$ must also be surjective for small enough positive times $t>0$. Additionally, it must hold that $\Verts{w_t-w_0}\leq R$ for small positive times, by continuity of $t\mapsto w_t$. This allows shows that $T_R>0$.

{\bf Dynamics of $J_{w_t}^{\dagger}$. } By smoothness of $J_w$ it holds that $J_{w_t}^{\dagger}$ is differentiable on the open interval $(0,T_R)$ and satisfies the following ODE obtained by direct differentiation:
\begin{align}\label{eq:diff_pseudo_inverse_jac}
	\dot{J}_t^{\dagger} =-J_t^{\dagger}\dot{J}_tJ_t^{\dagger} + (I-P_t)\dot{J}_t^{\top}\parens{J_tJ_t^{\top}}^{-1}.
\end{align}
In \cref{eq:diff_pseudo_inverse_jac}, we introduced notation $J_t := J_{w_t}$ and its time derivative $\dot{J}_t$ for simplicity and denote by $P_t$, the projector: $P_t:= J_t^{\top}\parens{J_tJ_t^{\top}}^{-1}J_t$.\\ 
{\bf Controlling $\verts{\dot{a}_t}$.} Taking the time derivative of $a_t$ and recalling the evolution \cref{eq:diff_pseudo_inverse_jac} yields:
$$
	a_t\dot{a}_t = -\parens{J_t^{\dagger}}^{\top}J_t^{\dagger}\dot{J}_tJ_t^{\dagger}.
$$
where we used that $\parens{J_t^{\dagger}}^{\top}\parens{I-P_t}=0$ by \cref{lem:operator_property_1} in \cref{appendix:aux_results}. Furthermore, taking the absolute value of the above equation, and recalling that the Frobenius norm is sub-multiplicative for the product of operators (\cref{lem:operator_property_2} in \cref{appendix:aux_results}), we directly get:
\begin{align}\label{eq:dynamics_a_t}
	a_t\verts{\dot{a}_t}\leq \verts{\parens{J_t^{\dagger}}^{\top}J_t^{\dagger}\dot{J}_tJ_t^{\dagger}}
	\leq a_t^3\Verts{\dot{J}_t}_F. 
\end{align}
By the chain rule, we have that $\dot{J}_t = \partial_{w}J_t \dot{w}_t$ therefore, $\Verts{\dot{J}_t}_F\leq \Verts{\partial_{w}J_t}_{op}\Verts{\dot{w}_t}$. Moreover, since $t< T_R$, then it must holds that $w_t\in B(w_0,R)$ so that $\Verts{\partial_{w}J_t}_{op}\leq C_R$. Henceforth, $\Verts{\dot{J}_t}_F\leq C_T\Verts{\dot{w}_t}$ and \cref{eq:dynamics_a_t} yields the desired inequality after dividing by $a_t$ which remains positive all times $t\in [0,T_R)$ since $J_t$ does not vanish. 
\end{proof}

The next proposition provides an estimate of the time derivatives $\verts{\dot{a}_t}$ and $\Verts{\dot{w}_t}$ up to time $T_R$. 
For simplicity, we introduce the notation $\Delta_t = \Verts{\nabla \mathcal{L}(f_{w_t})}.$
\begin{prop}\label{prop:dynamics_norm_pseudo_inverse}
Assume \ref{assump:A} and \ref{assump:diff} and that $w_0$ satisfies \cref{assump:surj}. Then, for any $R>0$, the time $T_R$ is positive and smaller than the maximal time $T$ up to which   \cref{eq:preconditioned_flow} is defined, i.e.\ $0<T_R\leq T $. Moreover, define $C := L\mu^{-1}\mu_H^{-1}\Delta_0 C_R$. Then $a_t$ and $\Verts{\dot{w}_t}$ satisfy the following differential inequality on $[0,T_R)$:
\begin{align*}
	\verts{\dot{a}_t}&\leq C a_t^3 e^{-\frac{\mu_{GN}}{2} t}\\
	\Verts{\dot{w}_t}&\leq C C_R^{-1} a_te^{-\frac{\mu_{GN}}{2} t}
\end{align*}
\end{prop}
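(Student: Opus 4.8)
The plan is to combine the differential inequality $\verts{\dot a_t}\le a_t^2 C_R\Verts{\dot w_t}$ already established for the pseudo-inverse norm with an explicit exponentially-decaying bound on the velocity $\Verts{\dot w_t}=\Verts{\Phi(w_t)}$. First I would dispose of the statement about $T_R$ exactly as in the preceding proposition: $T_R\le T$ because $w_t$ has to be well-defined for $t<T_R$, and $T_R>0$ because $J_{w_0}$ is surjective while both $w\mapsto J_w$ and $t\mapsto w_t$ are continuous, so the conditions $\sigma^{\star}(A_{w_t})>0$ and $w_t\in\mathcal{B}(w_0,R)$ both survive for small $t>0$.

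\textbf{Bounding the velocity.} Fix $t<T_R$ and suppress subscripts. Using the Woodbury rewriting of $\Phi$ from the proof of \cref{prop:regularity_phi}, $\Phi(w)=J^{\top}\parens{JJ^{\top}+\varepsilon(w)H^{-1}}^{-1}H^{-1}\nabla\mathcal{L}(f_w)$, so
\begin{align}
\Verts{\dot w_t}\le\Verts{J^{\top}\parens{JJ^{\top}+\varepsilon(w)H^{-1}}^{-1}}_{op}\,\Verts{H^{-1}}_{op}\,\Delta_t .
\end{align}
By \cref{assump:A}, $\Verts{H^{-1}}_{op}\le\mu_H^{-1}$, and by \cref{prop:decay_loss}, $\Delta_t\le\kappa\Delta_0 e^{-\tilde\mu t}$. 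The crucial step is to show that the first factor is at most $a_t$. Writing $A:=JJ^{\top}$, which is positive definite (as $t<T_R$ forces $\sigma^{\star}(A_{w_t})>0$) on the finite-dimensional codomain of $J$, and $B:=\varepsilon(w)H^{-1}\succeq0$, operator monotonicity of inversion gives $(A+B)^{-1}A(A+B)^{-1}\preceq(A+B)^{-1}\preceq A^{-1}$, so that for every vector $z$,
\begin{align}
\Verts{J^{\top}(A+B)^{-1}z}^2=z^{\top}(A+B)^{-1}A(A+B)^{-1}z\le z^{\top}A^{-1}z\le\Verts{A^{-1}}_{op}\Verts{z}^2 .
\end{align}
Hence $\Verts{J^{\top}(A+B)^{-1}}_{op}\le\Verts{A^{-1}}_{op}^{1/2}=\Verts{(JJ^{\top})^{-1}}_{op}^{1/2}=\Verts{J^{\dagger}}_{op}\le\Verts{J^{\dagger}}_F=a_t$. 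Combining the three estimates, $\Verts{\dot w_t}\le\mu_H^{-1}\kappa\Delta_0\,a_t\,e^{-\tilde\mu t}=CC_R^{-1}a_te^{-\tilde\mu t}$, which is the second claimed inequality; plugging it into $\verts{\dot a_t}\le a_t^2 C_R\Verts{\dot w_t}$ yields $\verts{\dot a_t}\le a_t^2 C_R\cdot CC_R^{-1}a_te^{-\tilde\mu t}=Ca_t^3e^{-\tilde\mu t}$, the first one.

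\textbf{Main obstacle.} The only genuinely delicate point is the operator-norm bound on $J^{\top}(JJ^{\top}+\varepsilon H^{-1})^{-1}$: one must keep track of the fact that $J$ maps the (possibly infinite-dimensional) parameter space into $\mathbb{R}^{N}$, so that $JJ^{\top}$ is the operator being inverted, and must pass cleanly between the Frobenius norm defining $a_t$ and the operator norm via $\Verts{J^{\dagger}}_{op}\le\Verts{J^{\dagger}}_F$. The operator-monotonicity inequality $0\prec A\preceq A+B\Rightarrow(A+B)^{-1}\preceq A^{-1}$ is precisely what makes the damping term harmless, so that the bound degrades to the undamped pseudo-inverse norm. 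Everything else is routine bookkeeping with the constants $\tilde\mu=\eta\mu L_H^{-1}$, $\kappa=L\mu^{-1}$, and $C=\mu_H^{-1}\kappa\Delta_0 C_R$.
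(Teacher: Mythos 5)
Your proof is correct and follows the paper's overall strategy: first establish $\verts{\dot a_t}\le a_t^2 C_R\Verts{\dot w_t}$ and the positivity of $T_R$ (which you correctly import from the preceding result), then close the loop by showing $\Verts{\dot w_t}\le\mu_H^{-1}a_t\Delta_t$ and invoking the linear decay $\Delta_t\le\kappa\Delta_0 e^{-\tilde\mu t}$. The one place where your route genuinely differs is the linear-algebra step bounding $\Verts{\dot w_t}$. The paper factors out the curvature, setting $B:=H^{1/2}J$ so that $\Phi(w)=B^{\top}(BB^{\top}+\varepsilon I)^{-1}H^{-1/2}\nabla\mathcal{L}(f_w)$, then bounds $\Verts{B^{\top}(BB^{\top}+\varepsilon I)^{-1}v}\le\Verts{BB^{\top}(BB^{\top}+\varepsilon I)^{-2}}_F^{1/2}\Verts{v}$ and invokes a dedicated Frobenius-norm lemma ($\Verts{BB^{\top}(BB^{\top}+\varepsilon I)^{-2}}_F\le\Verts{B^{\dagger}}_F^2$) together with $\Verts{B^\dagger}_F\le\mu_H^{-1/2}\Verts{J^\dagger}_F$. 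You instead keep $J$ unfactored, treat the damping $\varepsilon H^{-1}$ directly as a PSD perturbation of $A=JJ^{\top}$, and use operator monotonicity of inversion to get $\Verts{J^{\top}(A+\varepsilon H^{-1})^{-1}}_{op}\le\Verts{A^{-1}}_{op}^{1/2}=\Verts{J^{\dagger}}_{op}\le a_t$, pulling out the $\mu_H^{-1}$ from $\Verts{H^{-1}}_{op}$ in one shot. Both yield the same constant $\mu_H^{-1}a_t\Delta_t$; your variant is a bit more economical because it works with the operator norm (passing to the Frobenius bound $a_t$ only at the very end) and does not need the $H^{1/2}$ change of variables, while the paper's version is organized so it can reuse the Frobenius-norm lemma already stated in the appendix. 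Both correctly respect the possibly infinite-dimensional domain of $J$ since $JJ^{\top}$ acts on the finite-dimensional codomain.
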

\begin{proof}[Proof of \cref{prop:dynamics_norm_pseudo_inverse}]
By application of \cref{prop:dynamics_norm_pseudo_inverse_bound} to $(w_t)_{0\leq t<T}$, we directly have that $T_R$ is positive and smaller than $T$ and the following estimate holds for any $t<T_R$:
$$
	 \verts{\dot{a}_t} \leq a_t^2 C_R \Verts{\dot{w}_t}, 
$$
It remains to upper-bound $\Verts{\dot{w}_t}$.\\
{\bf Controlling $\Verts{\dot{w}_t}$. } 
 Ignoring the dependence on $w_t$ in the notations for simplicity and setting $B := H^{\frac{1}{2}}J$, we have, by definition of the dynamical system \cref{eq:preconditioned_flow}:  
\begin{align*}
	\Verts{\dot{w}_t}&\leq \Verts{\parens{J^{\top}HJ+\varepsilon I}^{-1}J^{\top}\nabla\mathcal{L}(f)}\\
	&= \Verts{J^{\top}\parens{JJ^{\top}+\varepsilon H^{-1}}^{-1}H^{-1}\nabla\mathcal{L}(f)}\\
	&= \Verts{B^{\top}\parens{BB^{\top}+\varepsilon}^{-1}H^{-\frac{1}{2}}\nabla\mathcal{L}(f)}\\
	&\leq \Verts{BB^{\top}\parens{BB^{\top}+\varepsilon I}^{-2}}_F^{\frac{1}{2}} \Verts{H^{-\frac{2}{2}}\nabla\mathcal{L}(f)}\\
	&\leq
	\Verts{B^{\dagger}}_F  \Verts{\nabla\mathcal{L}(f)}\\
	&\leq \mu_H^{-1}\Verts{J^{\dagger}}_F \Verts{\nabla\mathcal{L}(f)} = \mu_H^{-1} a_t \Delta_t,
\end{align*}
where the second line follows by the Woodbury matrix identity, and the third line follows by simple linear algebra. For the fourth line, we use the properties of the Frobenius norm. The fifth and last lines are direct consequences of \cref{lem:operator_property_3,lem:operator_property_2} in  \cref{appendix:aux_results}.\\
{\bf Concluding.}
We can combine the upper-bound on $\verts{\dot{a}_t}$ and $\Verts{\dot{w}_t}$ to get:
$$
	\verts{\dot{a}_t} \leq \mu_H^{-1}C_R a_t^3\Delta_t.
$$
Finally, since $t\leq T$, it follows from  \cref{prop:convergence_no_blow_up} that $\Delta_t\leq \frac{L}{\mu} \Delta_0 e^{-\frac{\mu_{GN}}{2} t}$ which allows to conclude.  
\end{proof}

The next proposition controls $a_t$ and $\Verts{w_t-w_0}$ in terms of $T_R$, $C_R$, and other constants.
\begin{prop}\label{prop:blow-up_jacobian_pseudo_inverse}
	Consider the same conditions as in \cref{prop:dynamics_norm_pseudo_inverse}. Let $\zeta := \frac{4L}{\mu\mu_H\mu_{GN}}$ 
	and consider the (possibly infinite) time $T^{+}$ defined as:
$$
	T^{+}=-2\mu_{GN}^{-1}\log\parens{1-(\zeta C_R \Delta_0 a_0^2)^{-1}},
$$
with the convention that $T^{+}=+\infty$ if $1\geq \zeta C_R \Delta_0 a_0^2$. Then for any time $t< T^{-}:= \min(T_R,T^{+})$ it holds that:
	\begin{align}\label{eq:main_estimates}
		\Verts{w_t-w_0}\leq \zeta  \Delta_0 a_0,\qquad
		a_t \leq a_0\parens{1 + \zeta C_R \Delta_0a_0^2\parens{e^{-\frac{\mu_{GN}}{2}t}-1}}^{-\frac{1}{2}}
	\end{align} 
\end{prop}
\begin{proof}[Proof of \cref{prop:blow-up_jacobian_pseudo_inverse}]
First, by  \cref{prop:dynamics_norm_pseudo_inverse}, we know that $a_t$ satisfies the following differential inequality:
$$
		\verts{\dot{a}_t}\leq C a_t^3 e^{-\frac{\mu_{GN}}{2} t},
$$
with $C:=\frac{L}{\mu_H\mu}\Delta_0 C_R$.  We will control $a_t$ by application of  Gr\"{o}nwall's inequality. To this end, consider the ODE: 
$$ 
	\dot{b}_t =  Cb_t^3e^{-\frac{\mu_{GN}}{2}t},\qquad b_0=a_0>0.
$$ 
We know by \cref{lem:gronwall_ineq} in \cref{appendix:aux_results} that $b_t$ is given by:
\begin{align*}
	b_t &= b_0\parens{1 + 4C\mu_{GN}^{-1}b_0^2\parens{e^{-\frac{\mu_{GN}}{2}t}-1}}^{-\frac{1}{2}},\\
	&= a_0\parens{1 + \zeta 2C_R\Delta_0\mu_{GN}^{-1}a_0^2\parens{e^{-\frac{\mu_{GN}}{2}t}-1}}^{-\frac{1}{2}}
\end{align*}
for all times $t<T^{+}$. Therefore, by Gr\"{o}nwall's inequality, it must hold that $a_t\leq b_t$ for all times $t<\min(T_R,T^{+})$. Moreover, recalling now that $\Verts{\dot{w}_t}\leq L\mu^{-1}\mu_H^{-1} \Delta_0 a_t e^{-\frac{\mu_{GN}}{2} t}$ by \cref{prop:dynamics_norm_pseudo_inverse}, we directly get that:
\begin{align}\label{eq:upper_bound_w}
	\Verts{\dot{w}_t} \leq \frac{C}{C_R} a_0 e^{-\frac{\mu_{GN}}{2}t}\parens{1 + 4C\mu_{GN}^{-1}b_0^2\parens{e^{-\frac{\mu_{GN}}{2}t}-1}}^{-\frac{1}{2}}.
\end{align} 
Integrating the above inequality in time yields the 
 following estimate on $\Verts{w_t-w_0}$ for any $t<\min(T_R,T^{+})$:
\begin{align*}
	\Verts{w_t-w_0} &\leq \int_0^t \Verts{\dot{w}_s} \diff s\\
		&\leq \frac{1-\parens{1+4C\mu_{GN}^{-1}a_0^2\parens{e^{-\frac{\mu_{GN}}{2}t}-1} }^{\frac{1}{2}} }{C_Ra_0}\\
		&\leq 4L(\mu\mu_H\mu_{GN})^{-1} \Delta_0 a_0= \zeta \Delta_0 a_0,
\end{align*}
where the second line follows by explicitly integrating the r.h.s. of \cref{eq:upper_bound_w} while the last line follows by the concavity of the square-root function. 
\end{proof}
\cref{prop:blow-up_jacobian_pseudo_inverse} shows that the Frobenius norm of the $J_t^{\dagger}$ remains bounded at all times provided that $\Delta_0 a_0^2$ is small enough, a quantity that depends only on the initial conditions. 
Moreover, it also shows that making the product $\Delta_0 a_0$ small enough at initialization ensures that $w_t$ remains in a ball of radius $R$ around $w_0$. 
We exploit these two consequences to prove \cref{prop:global_convergence} in \cref{proof_no_blow_up}.

\subsection{Absence of blow-up for almost-optimal initial linear layer: proof of \cref{prop:global_convergence}}\label{proof_no_blow_up}
To prove \cref{prop:global_convergence}, we rely on the results of \cref{sec:control_ntk} which allow controlling the evolution of the smallest singular value of the NTK matrix $A_{w_t}$. 
More precisely, \cref{prop:global_convergence} is a direct  consequence of \cref{prop:blow-up_jacobian_pseudo_inverse} with a particular choice for the initialization $w_0$. 
\begin{proof}[Proof of \cref{prop:global_convergence} ]
By assumption on $w_0$, it holds that 
$$
\Delta_0 := \Verts{\nabla \mathcal{L}(f_{w_0})}< \epsilon = \frac{\mu\mu_H\mu_{GN}}{8LN}\min(R,C_R^{-1})\min\parens{\sigma_0,\sigma_0^2}.	
$$
Moreover, by definition of $a_0 := \Verts{J_0^{\dagger}}_F$ and of $\sigma_0^2 = \sigma_{min}(G(u_0))$, we know that $a_0^2<N\sigma_0^{-2}$. Hence, we get 
$$
\Delta_0 < \frac{1}{2\zeta}\min(R,C_R^{-1})\min\parens{a_0^{-1}N^{-\frac{1}{2}},a_0^{-2}},	
$$
where we introduced $\zeta := \frac{4L}{\mu\mu_H\mu_{GN}}$. 
This above inequality directly yields:
\begin{align}\label{eq:bounds_initialization}
		\zeta\Delta_0 a_0&\leq \frac{R}{2N^{\frac{1}{2}}},\qquad
		\zeta C_R \Delta_0 a_0^2 <1,
	\end{align}  
Therefore, we can use the above inequalities in the estimates \cref{eq:main_estimates} of \cref{prop:blow-up_jacobian_pseudo_inverse} to get:
	$$
		\Verts{w_t-w_0}\leq \frac{R}{2N^{\frac{1}{2}}}, \qquad
		a_t \leq a_0\parens{1 - \zeta C_R \Delta_0 a_0^2}^{-\frac{1}{2}},
	$$
for any time $t \in [0,T_R)$, where $T_R$ is defined as:
$$
	T_R := \sup\braces{ t\geq 0 \quad \middle| \quad \Verts{w_t-w_0}< R,  \text{ and } \sigma^{\star}(A_{w_t})>0 }.
$$
This implies, in particular, that $w_t$ never escapes the ball $\mathcal{B}(w_0,R)$. Moreover, using \cref{eq:lower_bound_sigma} we get that the smallest singular value $\sigma^{\star}(A_{w_t}^{\top})$ is always lower-bounded by a positive constant $\tilde{\sigma}:=a_0^{-2}\parens{1-\zeta C_R \Delta_0 a_0^2}$. Therefore,  $T_R$ must necessarily be greater or equal to $T$, the maximal time over which $w_t$ is defined. Additionally, if $T$ was finite, then $w_t$ must escape any bounded domain. This contradicts the fact that $w_t\in \mathcal{B}(w_0,R)$. Therefore, $T$ must be infinite and the inequality \cref{eq:linear_convergence} applies at all times $t\geq 0$.	
\end{proof}

\subsection{Auxiliary results}\label{appendix:aux_results}

\begin{lem}\label{lem:operator_property_1}
	Let $J$ be a linear operator from a Hilbert space $\mathcal{W}$ to a finite-dimensional space $\mathcal{H}$ and assume that $JJ^{\top}$ is invertible. Define the projector $P = J^{\top}(JJ^{\top})^{-1}J$. Then the following relations hold:
$$
		(J^{\dagger})^{\top}(I-P)=0
$$
\end{lem}

\begin{lem}\label{lem:operator_property_3}
Let $J$ be a linear operator from a Hilbert space $\mathcal{W}$ to a finite-dimensional space $\mathcal{H}$ and assume that $JJ^{\top}$ is invertible. Then for any non-negative number $\epsilon$, it holds that:	
$$
		\Verts{JJ^{\top}(JJ^{\top}+\epsilon I)^{-2}}_F\leq \Verts{J^{\dagger}}^2_F
$$
	
\end{lem}

\begin{lem}\label{lem:upper_bound_op_A}
Let $\mathcal{W}$ be a Hilbert space and $\mathcal{R}$ be a finite-dimensional euclidean space.  
	Let $J$ be a linear operator from $\mathcal{W}$ to  $\mathcal{R}$ and $H$ an invertible operator from $\mathcal{R}$ to itself. Further, assume that $JJ^{\top}$ is invertible. Then the following holds for any non-negative number $\epsilon$:
$$
		\Verts{H^{-\frac{1}{2} 
		}\parens{JJ^{\top} +\epsilon H^{-1}}^{-1}H^{-\frac{1}{2}}}_{op}\leq \frac{1}{\epsilon + \sigma^{\star}(H)\sigma^{\star}(JJ^{\top})}.
$$
\end{lem}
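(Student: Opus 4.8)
The plan is to reduce the operator-norm bound to a Rayleigh-quotient estimate for the smallest eigenvalue of an explicitly computable operator. I treat $H$ as symmetric positive definite (which is how the lemma is used, cf.\ \cref{assump:A}), so that $H^{\frac{1}{2}}$ and $H^{-\frac{1}{2}}$ are well-defined self-adjoint operators; moreover $JJ^{\top}+\epsilon H^{-1}$ is positive definite on the finite-dimensional space $\mathcal{R}$, being the sum of the positive-definite operator $JJ^{\top}$ and the positive-semidefinite operator $\epsilon H^{-1}$, hence invertible. This legitimizes writing the operator in the statement.

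First I would observe that $M:=H^{-\frac{1}{2}}\parens{JJ^{\top}+\epsilon H^{-1}}^{-1}H^{-\frac{1}{2}}$ is self-adjoint and positive definite, so $\Verts{M}_{op}$ equals its largest eigenvalue, which is the reciprocal of the smallest eigenvalue of $M^{-1}$. Inverting directly,
\begin{align}
	M^{-1}= H^{\frac{1}{2}}\parens{JJ^{\top}+\epsilon H^{-1}}H^{\frac{1}{2}}= H^{\frac{1}{2}}JJ^{\top}H^{\frac{1}{2}}+\epsilon I,
\end{align}
and since $H^{\frac{1}{2}}JJ^{\top}H^{\frac{1}{2}}$ is symmetric positive definite, adding $\epsilon I$ merely shifts its spectrum, so that $\Verts{M}_{op}=\parens{\epsilon+\sigma^{\star}\parens{H^{\frac{1}{2}}JJ^{\top}H^{\frac{1}{2}}}}^{-1}$, using that the smallest singular value of a symmetric positive operator coincides with its smallest eigenvalue.

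It then remains to show $\sigma^{\star}\parens{H^{\frac{1}{2}}JJ^{\top}H^{\frac{1}{2}}}\geq \sigma^{\star}(H)\,\sigma^{\star}(JJ^{\top})$. For any unit vector $x\in\mathcal{R}$, setting $y:=H^{\frac{1}{2}}x$ and applying the variational characterization of the smallest eigenvalue twice,
\begin{align}
	x^{\top}H^{\frac{1}{2}}JJ^{\top}H^{\frac{1}{2}}x = y^{\top}JJ^{\top}y \;\geq\; \sigma^{\star}(JJ^{\top})\Verts{y}^{2} = \sigma^{\star}(JJ^{\top})\,x^{\top}Hx \;\geq\; \sigma^{\star}(JJ^{\top})\,\sigma^{\star}(H),
\end{align}
and taking the infimum over unit $x$ gives the claimed lower bound; combined with the previous paragraph this proves the lemma. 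The argument is elementary linear algebra; the only point requiring care is confirming that the operators involved are genuinely positive definite, so that the square roots, the inverse, and the identification of the operator norm with the top eigenvalue are all valid — this follows from \cref{assump:A} together with the standing invertibility of $JJ^{\top}$. I do not anticipate any real obstacle beyond this bookkeeping.
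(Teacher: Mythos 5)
Your proof is correct. The paper in fact leaves Lemma~\ref{lem:upper_bound_op_A} unproved (it is stated without an accompanying argument among the auxiliary results), so there is no official proof to compare against; your spectral argument — invert $M$, identify $M^{-1}=H^{1/2}JJ^{\top}H^{1/2}+\epsilon I$, and lower-bound $\sigma^{\star}\bigl(H^{1/2}JJ^{\top}H^{1/2}\bigr)$ by a two-step Rayleigh-quotient estimate — is the natural route and fills that gap cleanly. The one caveat you correctly flag (that $H$ must be taken symmetric positive definite, as in \cref{assump:A}, for $H^{\pm 1/2}$ to be well-defined and for $\sigma^{\star}(H)$ to coincide with the smallest eigenvalue) is indeed implicit in the statement's use of $H^{-1/2}$, so no further repair is needed.
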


\begin{lem}\label{lem:operator_property_2}
Let $J$ and $K$ be two Hilbert-Schmidt operators between two  Hilbert spaces. Then, it holds that:
$$
		\Verts{JK^{\top}}_F\leq \Verts{J}_{op}\Verts{K}_F\leq \Verts{J}_{F}\Verts{K}_F
$$
\end{lem}

\begin{lem}\label{lem:gronwall_ineq}
Let $C$ and $r$ be two positive constants. For a given initial condition $b_0>0$, consider the following differential equation:
$$
	\dot{b}_t = Cb_t^3 e^{-rt}.
$$
Then $b_t$ is defined for any time $t< T^{+}$ where $T^{+}$ is given by:
\begin{align*}
	T^{+}=\begin{cases}
			r^{-1}\log\parens{1-\frac{r}{2Cb_0^2}},&\qquad 2Cr^{-1}b_0^2>0\\
			+\infty &\qquad \text{Otherwise}.
	\end{cases}
\end{align*}
Moreover, $b_t$ is given by:
$$
	b_t = b_0\parens{1 + 2Cr^{-1}b_0^2\parens{e^{-rt}-1}}^{-\frac{1}{2}},\qquad t<T^{+}.
$$
\end{lem}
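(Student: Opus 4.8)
The plan is to solve this scalar ODE explicitly by separation of variables and then read the maximal interval of existence off the closed form. Since $(t,b)\mapsto Cb^{3}e^{-rt}$ is continuous in $t$ and locally Lipschitz in $b$, the Cauchy--Lipschitz (Picard--Lindel\"of) theorem provides a unique maximal solution $(b_t)_{0\le t<T^{+}}$ with $b_0>0$; along it $\dot b_t=Cb_t^{3}e^{-rt}>0$ whenever $b_t>0$, so in fact $b_t\ge b_0>0$ on the whole interval and dividing by $b_t^{3}$ is legitimate.

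First I would separate variables. From $\frac{\ud}{\ud t}\parens{b_t^{-2}}=-2b_t^{-3}\dot b_t=-2Ce^{-rt}$, integrating from $0$ to $t$ yields
\begin{align}
	b_t^{-2}=b_0^{-2}-\frac{2C}{r}\parens{1-e^{-rt}}=b_0^{-2}\parens{1+\frac{2Cb_0^{2}}{r}\parens{e^{-rt}-1}},
\end{align}
hence $b_t=b_0\parens{1+2Cr^{-1}b_0^{2}\parens{e^{-rt}-1}}^{-1/2}$, which is the claimed expression. A direct differentiation confirms this function solves the ODE with the correct initial value, so by uniqueness it is the maximal solution on its natural domain of definition.

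It then remains to identify that domain, namely the largest interval on which $g(t):=1+2Cr^{-1}b_0^{2}\parens{e^{-rt}-1}$ stays positive. Since $t\mapsto e^{-rt}$ decreases strictly from $1$ to $0$, the function $g$ is strictly decreasing with $g(0)=1>0$ and $g(t)\to 1-2Cr^{-1}b_0^{2}$ as $t\to\infty$. If $2Cr^{-1}b_0^{2}\le 1$, then $g>0$ for every $t\ge 0$, the solution is global and $T^{+}=+\infty$. If $2Cr^{-1}b_0^{2}>1$, then $g$ has a unique zero $T^{+}$, obtained by solving $e^{-rT^{+}}=1-\tfrac{r}{2Cb_0^{2}}$ (the right-hand side lying in $(0,1)$ under this condition), which gives $T^{+}=-r^{-1}\log\parens{1-\tfrac{r}{2Cb_0^{2}}}$; moreover $b_t\to+\infty$ as $t\uparrow T^{+}$, so this is a genuine finite-time blow-up and hence the maximal existence time. (Note that the statement as typeset drops the minus sign and writes the guard as $2Cr^{-1}b_0^{2}>0$; the intended guard is $2Cr^{-1}b_0^{2}>1$, consistent with the convention used in \cref{prop:blow-up_jacobian_pseudo_inverse}.)

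There is no substantial obstacle: the argument is an elementary integration followed by a monotonicity analysis of $g$. The only points that require slight care are invoking local Lipschitzness so that the explicit formula is the actual maximal solution and not merely a solution, and the threshold/sign bookkeeping in the case distinction, which is precisely the dichotomy recorded in the statement.
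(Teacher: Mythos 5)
Your proof takes the same route as the paper: invoke Cauchy--Lipschitz for a unique local solution, argue $b_t$ never vanishes so that dividing by $b_t^3$ is legitimate, integrate $\frac{\ud}{\ud t}(b_t^{-2}) = -2Ce^{-rt}$ to get the closed form, and read off the maximal time from positivity of $b_t^{-2}$. You are also right that the lemma statement as typeset has two typos --- the guard should be $2Cr^{-1}b_0^2 > 1$ rather than $>0$, and the expression for $T^{+}$ is missing the leading minus sign --- both consistent with the form actually used in \cref{prop:blow-up_jacobian_pseudo_inverse}.
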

\begin{proof}
	First note that a local solution exists by the Cauchy-Lipschitz theorem. Moreover, it must never vanish since otherwise, it would coincide with the null solution $b_t=0$ by uniqueness. However, this is impossible, since $b_0>0$. We can therefore divide the ODE by $b_t^3$ and explicitly integrate it which gives:
$$
		b_t^{-2} = b_0^{-2} + 2r^{-1}C\parens{e^{-rt}-1}.
$$
	The solution is defined only for times $t$ so that the r.h.s. is positive, which is exactly equivalent to having $t<T^{+}$.   
\end{proof}

\section{Additional experimental results}\label{sec:addtional_exp}

\subsection{Multi-seed experiments}\label{sec:ransom_seeds}
We present result obtained for 5 independent runs. Each run uses a different initialization for the parameters of both student and teacher networks. Additionally, the training and test data are all generated independently for each run. All these results are obtained using \verb+SiLU+ non-linearity. It is clear from  \cref{fig:loss_vs_std_2}, that the results display little variability w.r.t. the seed of the experiment, which justifies the single-seed setting chosen in the main paper.

\begin{figure}[ht]
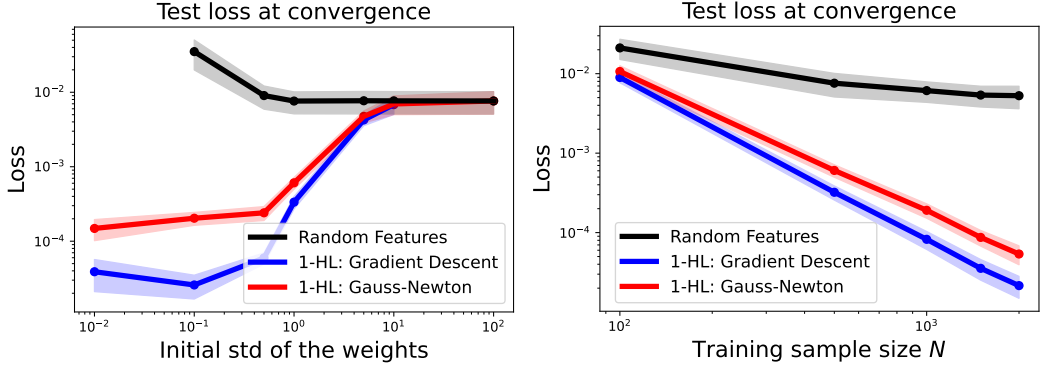

\vskip 0.2in
\begin{center}
\includegraphics[width=.5\columnwidth]{figures/icml_figures/loss_vs_std}%
\includegraphics[width=.5\columnwidth]{figures/icml_figures/loss_vs_N}
\caption{(Left) Test error vs std of the initial weights $\tau_0$. All models are trained until the training objective is smaller than $10^{-6}$ using $M=5000$ hidden units and $N=500$. Confidence interval at $95\%$ estimated from $5$ independent runs shown in shaded colors.
(Right) Test error vs training sample size. All models are trained until the training objective is smaller than $10^{-6}$ using $M=5000$ hidden units and initial std of the weights $\tau_0=1$. Confidence interval at $95\%$ estimated from $5$ independent runs shown in shaded colors.}
\label{fig:loss_vs_std_2}
\end{center} 
\vskip -0.2in
\end{figure} 

\paragraph{Effect of over-parameterization}
\cref{fig:loss_vs_M} (Left) shows the effect of over-parameterization of the network on  generalization error. It appears that the generalization error remains stable with an increasing parameterization for both GN and GD, as soon as the network has enough over-parameterization to fit the data exactly. On the other hand, the test error of RF improves with increasing over-parameterization which is consistent with \cite{Hastie:2022}. %

\begin{figure}[ht]
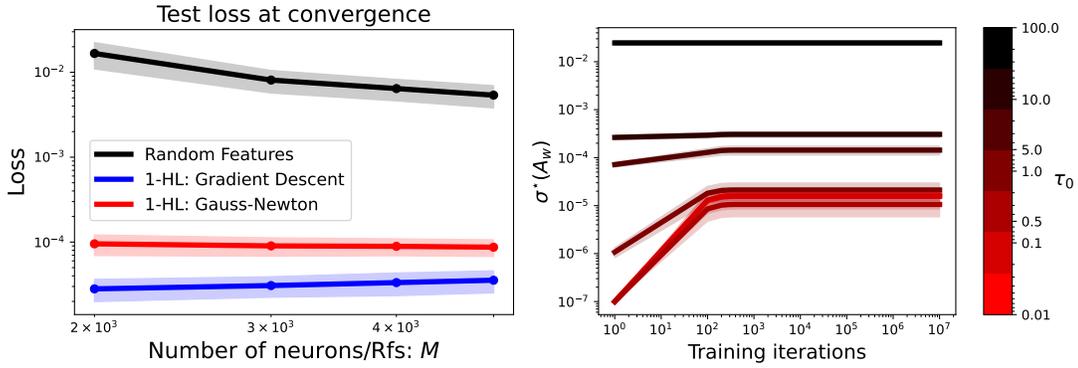

\vskip 0.2in 
\begin{center}
\includegraphics[width=.5\linewidth]{figures/icml_figures/loss_vs_M}\includegraphics[width=.535\linewidth]{figures/icml_figures/sigma}
\caption{(Left) Test error vs number of hidden units, All models are trained using $N=2000$ training samples until the training objective is smaller than $10^{-6}$. The initial std of the weights is set to $\tau_0=1$. (Right) Evolution of $\sigma^{\star}(A_w)$ during training using Gauss-Newton for different values of the initial std $\tau_0$ of the weights. All models are trained using $N=500$ training samples until the training objective is smaller than $10^{-6}$. We used $M=5000$ units. Confidence interval at $95\%$ estimated from $5$ independent runs shown in shaded colors.}
\label{fig:loss_vs_M}
\end{center}
\vskip -0.2in
\end{figure} 

\paragraph{Evolution of the smallest singular value of $A_w$.}
\cref{fig:loss_vs_M} (Right) shows that the smallest singular value $\sigma^{\star}(A_w)$ systematically increases during training, especially in the feature learning regime (bright red colors).  Hence, the dynamics are far from blowing up even as the features change substantially. On the other hand, $\sigma^{\star}(A_w)$ remains nearly constant for large values of $\tau_0$ (darker colors) which indicates that the features barely change in the kernel regime.

\subsection{Effect of the sample-size}\label{sec:effect_sample_size}
\cref{fig:effect_N} shows the evolution of the test error as the training sample size $N$ varies in $\{100,200,500,1000,1500\}$ for two choices of activation functions: \verb+ReLU+ (Left) and \verb+SiLU+ (Right). In both cases, we display the results for two different values for the initial std $\tau_0$ ($10^{-3}$ and $1$). Results are reported for the best performing step size for each of GN and GD. 
The first observation is that the performance gap between random features RF and learned features (using either GN or GD) keeps increasing with the sample size. This observation suggests that the learned features use training samples more efficiently than random features. 
The second observation is the almost affine relation between the generalization error $\mathcal{L}_{test}$ and training sample size in a logarithmic scale with a strong dependence of the slope on the variance at initialization and the optimization method. Such affine relation implies the following upper-bound on $\mathcal{L}_{test}$ in terms of sample-size $N$:
$$
	\mathcal{L}_{test} \leq C \frac{1}{N^{\alpha}},
$$
for some positive constants $C$ and $\alpha$. 
The coefficient $\alpha$ controls the speed at which the estimator approximates $f^{\star}$ as one accesses more training data and usually appears in learning theory to describe the statistical convergence rate of a given estimator \cite{Tsybakov:2009}. The consistent and strong dependence of $\alpha$ on the optimization methods (GN vs GD) and regime (kernel vs feature learning) further confirms the implicit bias of both initialization and optimization algorithms. 
\begin{figure*}[ht]
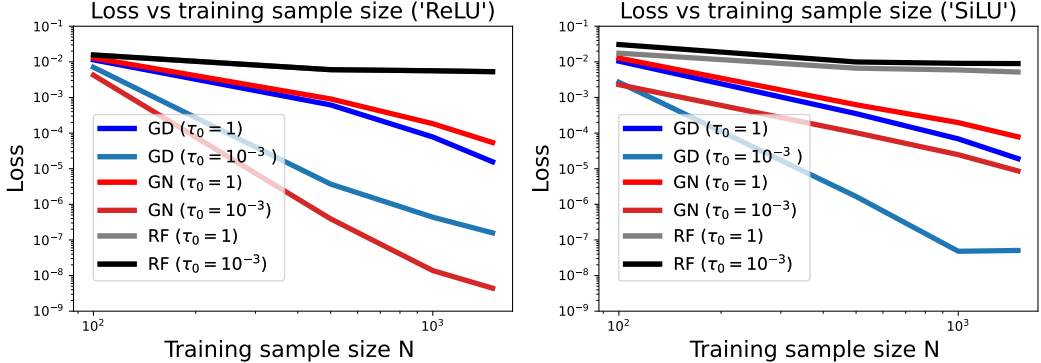

\vskip 0.2in
\begin{center}
\includegraphics[width=.5\linewidth]{figures/effect_N_NL_ReLU}%
\includegraphics[width=.5\linewidth]{figures/effect_N_NL_SiLU}
\caption{
Final values of various metrics vs training sample size for both ReLU (left) and SiLU (right) networks. 
Two values for $\tau_0$ are displayed for each method $\tau_0\in \{10^{-3},1\}$ while training data ranges from  $N=100$ to $N=1500$. 
All models are optimized until the training objective is smaller than $10^{-6}$ using $M=5000$ hidden units. For both GD and GN, results are reported for the best-performing step-size $\lambda$ selected according to the test loss on a regular logarithmic grid ranging from $10^{-3}$ to $10^3$.}
\label{fig:effect_N}
\end{center}
\vskip -0.2in
\end{figure*} 

\subsection{Choice of the non-linearity}\label{sec:non_linearity}

\cref{fig:loss_vs_std_silu_large_inv_temp} shows the effect of step-size on the final performance of a \verb+SiLU+ network with two choices for the temperature parameter: $\beta = 1$ and $\beta=10^6$. 
Similarly to \cref{fig:loss_vs_std}, it is clear that increasing the step size in Gauss-Newton results in features that do not generalize well, while the opposite is observed in case of gradient descent. On the other hand, depending on the value of $\beta$, we observe that GN either outperforms GD ($\beta= 10^6$) or the opposite ($\beta=1$). This variability suggests that the effectiveness of an optimization method in improving generalization is dependent on the specific characteristics of the problem at hand. 
Finally, note that for $\beta= 10^6$, we recover almost the same results as those obtained for \verb+ReLU+ in \cref{fig:loss_vs_std} (right). This is due to the fact \verb+SiLU+ becomes a tight smooth approximation to \verb+ReLU+ for large values of $\beta$.  
We also notice that when using a larger std for the initial weights (ex. $\tau_0=1$), using either \verb+ReLU+ or \verb+SiLU+ with ($\beta=1$) yields similar results, as shown in \cref{fig:loss_vs_lr_large_std}.

\begin{figure}[ht]
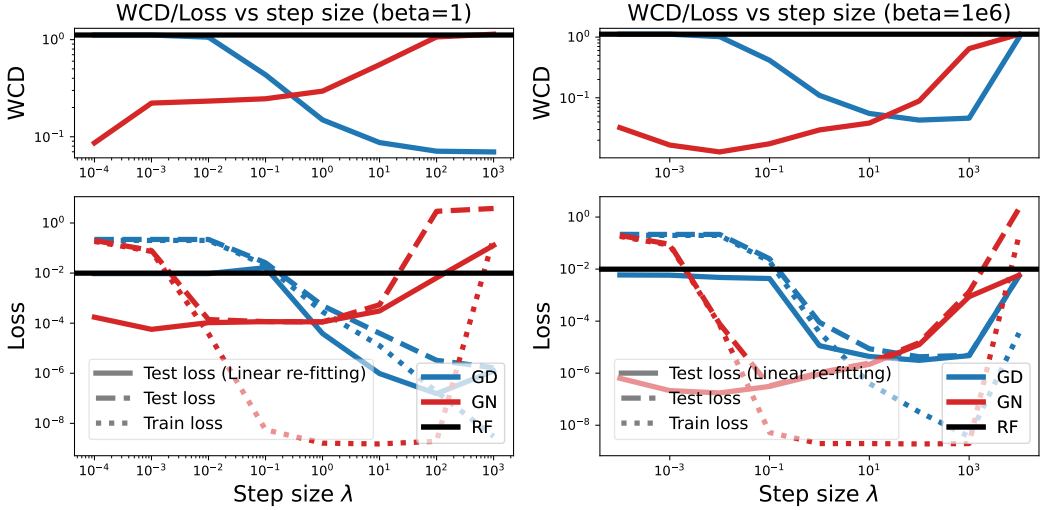

\vskip 0.2in
\begin{center}
\includegraphics[width=.5\columnwidth]{figures/effect_lr_NL_SiLU_std_0.001_beta_1}%
\includegraphics[width=.5\columnwidth]{figures/effect_lr_NL_SiLU_std_0.001_beta_1000000}
\caption{
Final values of various metrics vs the step size for both SiLU network with $\beta=1$ (left) and $\beta = 10^{6}$ (right).   
(Right figure) The std of the weights at initialization is set to $\tau_0=10^{-3}$. 
All models are optimized up to a training error  of $10^{-6}$ or until the maximum number of steps is exceeded, ($M=5000$ , $N=500$).
}
\label{fig:loss_vs_std_silu_large_inv_temp}
\end{center}  
\vskip -0.2in
\end{figure}

\begin{figure}[ht]
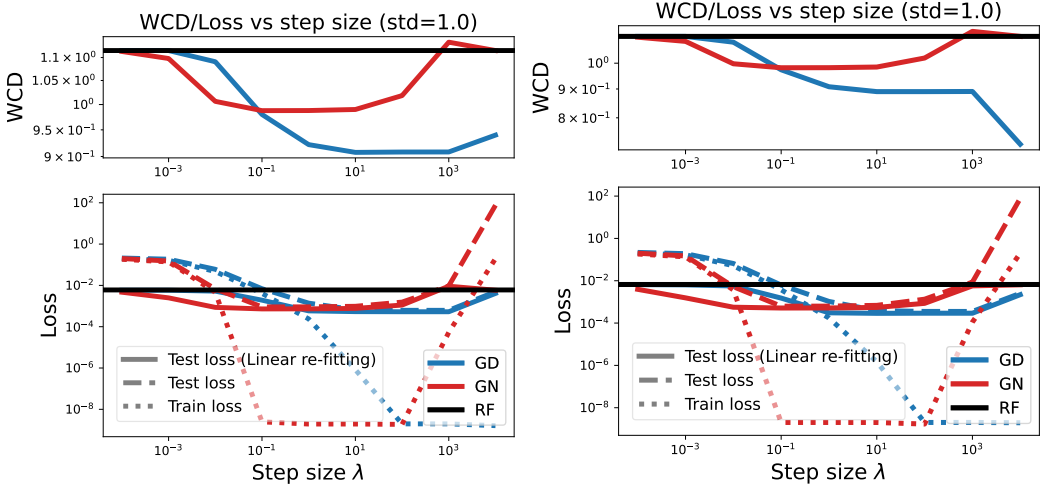

\vskip 0.2in
\begin{center}
\includegraphics[width=.5\columnwidth]{figures/effect_lr_NL_ReLU_std_1.0_beta_1}%
\includegraphics[width=.5\columnwidth]{figures/effect_lr_NL_SiLU_std_1.0_beta_1}
\caption{
Final values of various metrics vs the step size for both ReLU  (left) and SiLU (right) networks. The std of the weights at initialization is set to $\tau_0=1$. 
All models are optimized up to a training error  of $10^{-6}$ or until the maximum number of steps is exceeded, ($M=5000$ , $N=500$).
}
\label{fig:loss_vs_lr_large_std}
\end{center}  
\vskip -0.2in
\end{figure}

\subsection{Experiments on MNIST}\label{sec:MNIST}

We have performed a series of experiments on MNIST to illustrate the behavior of GN on higher dimensional data (notably, with multidimensional targets). To stay as close as possible to the theoretical framework, we modify the original MNIST dataset. Specifically, we construct a student/teacher setup
in which the trained network can achieve a zero-loss, which is an assumption made in Section~\ref{sec:over-parameterized}.

\paragraph{Building datasets based on MNIST by using a teacher.}
In our theoretical framework, we restricted ourselves to cases where the minimum loss can be achieved on the considered model.
To make this condition hold, we build a new dataset by using a teacher trained on the MNIST classification task \cite{Deng:2012}:
\begin{enumerate}
    \item we construct a training set for the teacher  which consists of $\mathcal{D}_{tr} \cup \mathcal{D}_{ts}$, where $\mathcal{D}_{ts}$ is the original MNIST test dataset and $\mathcal{D}_{tr}$ is a balanced subset of size $3000$ of the original MNIST training set; 

    \item we then train a teacher neural network  with one hidden layer of size $50$ and \verb+ReLU+ activation function on training set $\mathcal{D}_{tr} \cup \mathcal{D}_{ts}$ using a cross-entropy objective;
    \item given the trained teacher $f_T$, we build new datasets $\bar{\mathcal{D}}_{tr}$ and $\bar{\mathcal{D}}_{ts}$ for the student network:
    \begin{align*}
        \bar{\mathcal{D}}_{tr} &:= \{(x_i, \mathrm{Softmax}(f_T(x_i))) : (x_i, y_i) \in \mathcal{D}_{tr}\}, \\
        \bar{\mathcal{D}}_{ts} &:= \{(x_i, \mathrm{Softmax}(f_T(x_i))) : (x_i, y_i) \in \mathcal{D}_{ts}\}.
    \end{align*}
\end{enumerate}

\paragraph{Training procedure.}
Once the pair of training and testing datasets $(\bar{\mathcal{D}}_{tr}, \bar{\mathcal{D}}_{ts})$ has been built, we use them to train and test a \emph{student} neural network based on \eqref{eq:one_hidden}, with one hidden layer of size $5000$ and a \verb+ReLU+ activation function. We use a similar initialization scheme for the student's network and fix the std of the initial weights to $\tau_0 = 0.001$. For training, we use a quadratic loss between the student and teacher's outputs and perform a maximum of $100000$ iterations of either GN or GD. The training stops whenever the training loss goes below $10^{-6}$. We vary the step size $\lambda$ on a logarithmic range $[10^{-3}, 10^{3}]$ for both methods and evaluate the test loss and test loss after linear re-fitting.   

\paragraph{Results.}
\begin{figure}[ht]
\begin{center}
\includegraphics[width=.5\columnwidth]{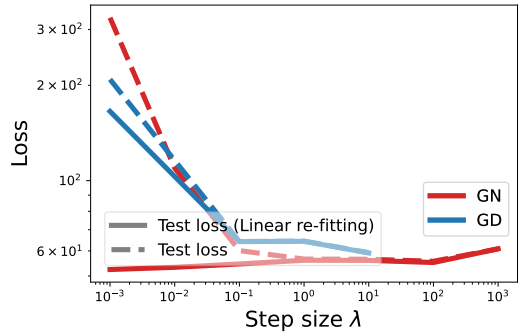}
\caption{
Final values of various metrics vs the step size for a ReLU network trained using MNIST data. Std of the initial weights is set to $\tau_0=0.001$. For GD, results are displayed up to $\lambda = 10$ the algorithm diverges for higher values. 
} 
\label{fig:mnist}
\end{center}  
\end{figure}
\cref{fig:mnist} shows the evolution of the test loss (both with or without re-fitting) as a function of the step size. We observe similar behavior as in the main experiments, with hidden learning occurring for GN when using small step sizes and a degraded generalization error as the step size increases. On the other hand, generalization improves for GD by increasing the step size up to the point where optimization becomes unstable. 
\end{document}